\PassOptionsToPackage{table}{xcolor}
\documentclass{article} 
\usepackage{iclr2027_conference,times}

\usepackage{amsmath,amsfonts,bm}

\def\eqref#1{equation~\ref{#1}}

\def\1{\bm{1}}

\DeclareMathAlphabet{\mathsfit}{\encodingdefault}{\sfdefault}{m}{sl}
\SetMathAlphabet{\mathsfit}{bold}{\encodingdefault}{\sfdefault}{bx}{n}

\usepackage{hyperref}
\usepackage{url}

\hypersetup{
    colorlinks=true,
    linkcolor=blue,       
    citecolor=blue,       
    urlcolor=blue          
}

\usepackage{tikz}
\usetikzlibrary{shapes}
\usepackage[T1]{fontenc}
\usepackage{amsmath}
\usepackage{pgfplots}
\usepackage{booktabs,array} \usepackage[table]{xcolor}
\usepackage{caption}

\usepackage{xcolor}
\definecolor{accent}{RGB}{0,114,178} 
\definecolor{accentwarm}{RGB}{255,140,0} 

\pgfplotsset{compat=1.18}
\usepgfplotslibrary{groupplots}
\usepgfplotslibrary{fillbetween}

\usetikzlibrary{patterns}
\usepackage[T1]{fontenc}
\usepackage{amsmath}
\usepackage{pgfplots}
\pgfplotsset{compat=1.18}
\usepgfplotslibrary{groupplots}
\usepgfplotslibrary{fillbetween}
\usetikzlibrary{patterns}
\definecolor{cinfo}{RGB}{0,114,178}
\definecolor{ccluster}{RGB}{213,94,0}
\definecolor{call}{RGB}{60,60,60}
\definecolor{caccent}{RGB}{0,158,115}
\definecolor{cmuted}{RGB}{140,140,140}
\definecolor{chighlight}{RGB}{204,121,167}
\definecolor{cratio}{RGB}{213,94,0}
\definecolor{cguide}{RGB}{120,120,120}
\definecolor{ccobit}{RGB}{0,114,178}
\definecolor{cclust}{RGB}{213,94,0}
\definecolor{cclust32}{RGB}{0,158,115}
\definecolor{cdata}{RGB}{90,90,90}

\pgfplotsset{
  paperaxis/.style={
    scale only axis,
    axis line style={line width=0.5pt, color=black!70},
    tick style={line width=0.5pt, color=black!70},
    tick align=outside,
    tick pos=left,
    grid=major,
    grid style={line width=0.35pt, color=black!12},
    label style={font=\small},
    tick label style={font=\footnotesize},
    legend style={
      font=\footnotesize,
      draw=black!25,
      line width=0.4pt,
      fill=white,
      fill opacity=0.9,
      text opacity=1,
      inner sep=2.5pt,
      row sep=0.5pt,
    },
    legend cell align=left,
    every axis title/.style={font=\small, yshift=1pt},
  },
}

\tikzset{
  panellabel/.style={font=\small\bfseries, anchor=south west, align=left,
                     xshift=-1.2em, yshift=1.5pt},
}
 \usepackage[T1]{fontenc}
\usepackage{amsmath}
\usepackage{booktabs}
\usepackage{pgfplots}
\pgfplotsset{compat=1.18}
\usepgfplotslibrary{groupplots}
\usepgfplotslibrary{fillbetween}
\usetikzlibrary{patterns}
\definecolor{cinfo}{RGB}{0,114,178}
\definecolor{ccluster}{RGB}{213,94,0}
\definecolor{call}{RGB}{60,60,60}
\definecolor{caccent}{RGB}{0,158,115}
\definecolor{cmuted}{RGB}{140,140,140}
\definecolor{chighlight}{RGB}{204,121,167}
\definecolor{cratio}{RGB}{213,94,0}
\definecolor{cguide}{RGB}{120,120,120}
\definecolor{ccobit}{RGB}{0,114,178}
\definecolor{cclust}{RGB}{213,94,0}
\definecolor{cclust32}{RGB}{0,158,115}
\definecolor{cdata}{RGB}{90,90,90}
\definecolor{crand}{RGB}{204,121,167}
\definecolor{cembed}{RGB}{204,121,167}
\definecolor{cseq1}{RGB}{0,114,178}
\definecolor{cseq2}{RGB}{213,94,0}
\definecolor{cseq3}{RGB}{0,158,115}
\definecolor{cseq4}{RGB}{204,121,167}
\definecolor{cseq5}{RGB}{230,159,0}
\definecolor{cseq6}{RGB}{86,180,233}

\pgfplotsset{
  paperaxis/.style={
    scale only axis,
    axis line style={line width=0.5pt, color=black!70},
    tick style={line width=0.5pt, color=black!70},
    tick align=outside,
    tick pos=left,
    grid=major,
    grid style={line width=0.35pt, color=black!12},
    label style={font=\small},
    tick label style={font=\footnotesize},
    legend style={
      font=\footnotesize,
      draw=black!25,
      line width=0.4pt,
      fill=white,
      fill opacity=0.9,
      text opacity=1,
      inner sep=2.5pt,
      row sep=0.5pt,
    },
    legend cell align=left,
    every axis title/.style={font=\small, yshift=1pt},
  },
}

\tikzset{
  panellabel/.style={font=\small\bfseries, anchor=south west, align=left,
                     xshift=-1.2em, yshift=1.5pt},
}

\usepackage[T1]{fontenc}
\usepackage{amsmath}
\usepackage{booktabs}
\usepackage{pgfplots}
\pgfplotsset{compat=1.18}
\usepgfplotslibrary{groupplots}
\usepgfplotslibrary{fillbetween}
\usetikzlibrary{patterns}
\definecolor{cinfo}{RGB}{0,114,178}
\definecolor{ccluster}{RGB}{213,94,0}
\definecolor{call}{RGB}{60,60,60}
\definecolor{caccent}{RGB}{0,158,115}
\definecolor{cmuted}{RGB}{140,140,140}
\definecolor{chighlight}{RGB}{204,121,167}
\definecolor{cratio}{RGB}{213,94,0}
\definecolor{cguide}{RGB}{120,120,120}
\definecolor{ccobit}{RGB}{0,114,178}
\definecolor{cclust}{RGB}{213,94,0}
\definecolor{cclust32}{RGB}{0,158,115}
\definecolor{cclust_b}{RGB}{204,121,167}
\definecolor{cdata}{RGB}{90,90,90}
\definecolor{crand}{RGB}{204,121,167}
\definecolor{cembed}{RGB}{204,121,167}
\definecolor{cseq1}{RGB}{0,114,178}
\definecolor{cseq2}{RGB}{213,94,0}
\definecolor{cseq3}{RGB}{0,158,115}
\definecolor{cseq4}{RGB}{204,121,167}
\definecolor{cseq5}{RGB}{230,159,0}
\definecolor{cseq6}{RGB}{86,180,233}
\definecolor{cseq7}{RGB}{140,60,110}
\definecolor{cseq8}{RGB}{0,70,110}
\definecolor{cflmbase}{RGB}{30,30,30}

\pgfplotsset{
  paperaxis/.style={
    scale only axis,
    axis line style={line width=0.5pt, color=black!70},
    tick style={line width=0.5pt, color=black!70},
    tick align=outside,
    tick pos=left,
    grid=major,
    grid style={line width=0.35pt, color=black!12},
    label style={font=\small},
    tick label style={font=\footnotesize},
    legend style={
      font=\footnotesize,
      draw=black!25,
      line width=0.4pt,
      fill=white,
      fill opacity=0.9,
      text opacity=1,
      inner sep=2.5pt,
      row sep=0.5pt,
    },
    legend cell align=left,
    every axis title/.style={font=\small, yshift=1pt},
  },
}

\tikzset{
  panellabel/.style={font=\small\bfseries, anchor=south west, align=left,
                     xshift=-1.2em, yshift=1.5pt},
}

\usepackage[T1]{fontenc}
\usepackage{amsmath}
\usepackage{booktabs}
\usepackage{pgfplots}
\pgfplotsset{compat=1.18}
\usepgfplotslibrary{groupplots}
\usepgfplotslibrary{fillbetween}
\usetikzlibrary{patterns}
\usetikzlibrary{arrows.meta}
\definecolor{cinfo}{RGB}{0,114,178}
\definecolor{ccluster}{RGB}{213,94,0}
\definecolor{call}{RGB}{60,60,60}
\definecolor{caccent}{RGB}{0,158,115}
\definecolor{cmuted}{RGB}{140,140,140}
\definecolor{chighlight}{RGB}{204,121,167}
\definecolor{cratio}{RGB}{213,94,0}
\definecolor{cguide}{RGB}{120,120,120}
\definecolor{ccobit}{RGB}{0,114,178}
\definecolor{cclust}{RGB}{213,94,0}
\definecolor{cclust32}{RGB}{0,158,115}
\definecolor{cclust_b}{RGB}{204,121,167}
\definecolor{cdata}{RGB}{90,90,90}
\definecolor{crand}{RGB}{204,121,167}
\definecolor{cembed}{RGB}{204,121,167}
\definecolor{cseq1}{RGB}{0,114,178}
\definecolor{cseq2}{RGB}{213,94,0}
\definecolor{cseq3}{RGB}{0,158,115}
\definecolor{cseq4}{RGB}{204,121,167}
\definecolor{cseq5}{RGB}{230,159,0}
\definecolor{cseq6}{RGB}{86,180,233}
\definecolor{cseq7}{RGB}{140,60,110}
\definecolor{cseq8}{RGB}{0,70,110}
\definecolor{cflmbase}{RGB}{30,30,30}
\definecolor{cchurn1}{RGB}{230,159,0}
\definecolor{cchurn2}{RGB}{86,180,233}
\definecolor{ccobitsn}{RGB}{86,180,233}
\definecolor{cchurnwin}{RGB}{0,158,115}
\definecolor{cchurnfz}{RGB}{230,159,0}
\definecolor{cchurncln}{RGB}{0,158,115}

\pgfplotsset{
  paperaxis/.style={
    scale only axis,
    axis line style={line width=0.5pt, color=black!70},
    tick style={line width=0.5pt, color=black!70},
    tick align=outside,
    tick pos=left,
    grid=major,
    grid style={line width=0.35pt, color=black!12},
    label style={font=\small},
    tick label style={font=\footnotesize},
    legend style={
      font=\footnotesize,
      draw=black!25,
      line width=0.4pt,
      fill=white,
      fill opacity=0.9,
      text opacity=1,
      inner sep=2.5pt,
      row sep=0.5pt,
    },
    legend cell align=left,
    every axis title/.style={font=\small, yshift=1pt},
  },
}

\tikzset{
  panellabel/.style={font=\small\bfseries, anchor=south west, align=left,
                     xshift=-1.2em, yshift=1.5pt},
}
 
\usepackage[T1]{fontenc}
\usepackage{amsmath}
\usepackage{booktabs}
\usepackage{pgfplots}
\pgfplotsset{compat=1.18}
\usepgfplotslibrary{groupplots}
\usepgfplotslibrary{fillbetween}
\usetikzlibrary{patterns}
\definecolor{cinfo}{RGB}{0,114,178}
\definecolor{ccluster}{RGB}{213,94,0}
\definecolor{call}{RGB}{60,60,60}
\definecolor{caccent}{RGB}{0,158,115}
\definecolor{cmuted}{RGB}{140,140,140}
\definecolor{chighlight}{RGB}{204,121,167}
\definecolor{cratio}{RGB}{213,94,0}
\definecolor{cguide}{RGB}{120,120,120}
\definecolor{ccobit}{RGB}{0,114,178}
\definecolor{cclust}{RGB}{213,94,0}
\definecolor{cclust32}{RGB}{0,158,115}
\definecolor{cdata}{RGB}{90,90,90}
\definecolor{crand}{RGB}{204,121,167}
\definecolor{cembed}{RGB}{204,121,167}
\definecolor{cseq1}{RGB}{0,114,178}
\definecolor{cseq2}{RGB}{213,94,0}
\definecolor{cseq3}{RGB}{0,158,115}
\definecolor{cseq4}{RGB}{204,121,167}
\definecolor{cseq5}{RGB}{230,159,0}
\definecolor{cseq6}{RGB}{86,180,233}

\pgfplotsset{
  paperaxis/.style={
    scale only axis,
    axis line style={line width=0.5pt, color=black!70},
    tick style={line width=0.5pt, color=black!70},
    tick align=outside,
    tick pos=left,
    grid=major,
    grid style={line width=0.35pt, color=black!12},
    label style={font=\small},
    tick label style={font=\footnotesize},
    legend style={
      font=\footnotesize,
      draw=black!25,
      line width=0.4pt,
      fill=white,
      fill opacity=0.9,
      text opacity=1,
      inner sep=2.5pt,
      row sep=0.5pt,
    },
    legend cell align=left,
    every axis title/.style={font=\small, yshift=1pt},
  },
}

\tikzset{
  panellabel/.style={font=\small\bfseries, anchor=south west, align=left,
                     xshift=-1.2em, yshift=1.5pt},
}
 
\usepackage[T1]{fontenc}
\usepackage{amsmath}
\usepackage{booktabs}
\usepackage{pgfplots}
\pgfplotsset{compat=1.18}
\usepgfplotslibrary{groupplots}
\usepgfplotslibrary{fillbetween}
\usetikzlibrary{patterns}
\usetikzlibrary{arrows.meta}
\definecolor{cinfo}{RGB}{0,114,178}
\definecolor{ccluster}{RGB}{213,94,0}
\definecolor{call}{RGB}{60,60,60}
\definecolor{caccent}{RGB}{0,158,115}
\definecolor{cmuted}{RGB}{140,140,140}
\definecolor{chighlight}{RGB}{204,121,167}
\definecolor{cratio}{RGB}{213,94,0}
\definecolor{cguide}{RGB}{120,120,120}
\definecolor{ccobit}{RGB}{0,114,178}
\definecolor{cclust}{RGB}{213,94,0}
\definecolor{cclust32}{RGB}{0,158,115}
\definecolor{cclust_b}{RGB}{204,121,167}
\definecolor{cdata}{RGB}{90,90,90}
\definecolor{crand}{RGB}{204,121,167}
\definecolor{cembed}{RGB}{204,121,167}
\definecolor{cprefix}{RGB}{0,158,115}
\definecolor{cseq1}{RGB}{0,114,178}
\definecolor{cseq2}{RGB}{213,94,0}
\definecolor{cseq3}{RGB}{0,158,115}
\definecolor{cseq4}{RGB}{204,121,167}
\definecolor{cseq5}{RGB}{230,159,0}
\definecolor{cseq6}{RGB}{86,180,233}
\definecolor{cseq7}{RGB}{140,60,110}
\definecolor{cseq8}{RGB}{0,70,110}
\definecolor{cflmbase}{RGB}{30,30,30}
\definecolor{cchurn1}{RGB}{230,159,0}
\definecolor{cchurn2}{RGB}{86,180,233}
\definecolor{ccobitsn}{RGB}{86,180,233}
\definecolor{cchurnwin}{RGB}{0,158,115}
\definecolor{cchurnfz}{RGB}{230,159,0}
\definecolor{cchurncln}{RGB}{0,158,115}

\pgfplotsset{
  paperaxis/.style={
    scale only axis,
    axis line style={line width=0.5pt, color=black!70},
    tick style={line width=0.5pt, color=black!70},
    tick align=outside,
    tick pos=left,
    grid=major,
    grid style={line width=0.35pt, color=black!12},
    label style={font=\small},
    tick label style={font=\footnotesize},
    legend style={
      font=\footnotesize,
      draw=black!25,
      line width=0.4pt,
      fill=white,
      fill opacity=0.9,
      text opacity=1,
      inner sep=2.5pt,
      row sep=0.5pt,
    },
    legend cell align=left,
    every axis title/.style={font=\small, yshift=1pt},
  },
}

\tikzset{
  panellabel/.style={font=\small\bfseries, anchor=south west, align=left,
                     xshift=-1.2em, yshift=1.5pt},
}
 
\usepackage[T1]{fontenc}
\usepackage{amsmath}
\usepackage{booktabs}
\usepackage{pgfplots}
\pgfplotsset{compat=1.18}
\usepgfplotslibrary{groupplots}
\usepgfplotslibrary{fillbetween}
\usetikzlibrary{patterns}
\definecolor{cinfo}{RGB}{0,114,178}
\definecolor{ccluster}{RGB}{213,94,0}
\definecolor{call}{RGB}{60,60,60}
\definecolor{caccent}{RGB}{0,158,115}
\definecolor{cmuted}{RGB}{140,140,140}
\definecolor{chighlight}{RGB}{204,121,167}
\definecolor{cratio}{RGB}{213,94,0}
\definecolor{cguide}{RGB}{120,120,120}
\definecolor{ccobit}{RGB}{0,114,178}
\definecolor{cclust}{RGB}{213,94,0}
\definecolor{cclust32}{RGB}{0,158,115}
\definecolor{cdata}{RGB}{90,90,90}
\definecolor{crand}{RGB}{204,121,167}
\definecolor{cembed}{RGB}{204,121,167}
\definecolor{cseq1}{RGB}{0,114,178}
\definecolor{cseq2}{RGB}{213,94,0}
\definecolor{cseq3}{RGB}{0,158,115}
\definecolor{cseq4}{RGB}{204,121,167}
\definecolor{cseq5}{RGB}{230,159,0}
\definecolor{cseq6}{RGB}{86,180,233}

\pgfplotsset{
  paperaxis/.style={
    scale only axis,
    axis line style={line width=0.5pt, color=black!70},
    tick style={line width=0.5pt, color=black!70},
    tick align=outside,
    tick pos=left,
    grid=major,
    grid style={line width=0.35pt, color=black!12},
    label style={font=\small},
    tick label style={font=\footnotesize},
    legend style={
      font=\footnotesize,
      draw=black!25,
      line width=0.4pt,
      fill=white,
      fill opacity=0.9,
      text opacity=1,
      inner sep=2.5pt,
      row sep=0.5pt,
    },
    legend cell align=left,
    every axis title/.style={font=\small, yshift=1pt},
  },
}

\tikzset{
  panellabel/.style={font=\small\bfseries, anchor=south west, align=left,
                     xshift=-1.2em, yshift=1.5pt},
}

\usepackage[T1]{fontenc}
\usepackage{amsmath}
\usepackage{booktabs}
\usepackage{pgfplots}
\pgfplotsset{compat=1.18}
\usepgfplotslibrary{groupplots}
\usepgfplotslibrary{fillbetween}
\usetikzlibrary{patterns}
\definecolor{cinfo}{RGB}{0,114,178}
\definecolor{ccluster}{RGB}{213,94,0}
\definecolor{call}{RGB}{60,60,60}
\definecolor{caccent}{RGB}{0,158,115}
\definecolor{cmuted}{RGB}{140,140,140}
\definecolor{chighlight}{RGB}{204,121,167}
\definecolor{cratio}{RGB}{213,94,0}
\definecolor{cguide}{RGB}{120,120,120}
\definecolor{ccobit}{RGB}{0,114,178}
\definecolor{cclust}{RGB}{213,94,0}
\definecolor{cclust32}{RGB}{0,158,115}
\definecolor{cclust_b}{RGB}{204,121,167}
\definecolor{cdata}{RGB}{90,90,90}
\definecolor{crand}{RGB}{204,121,167}
\definecolor{cembed}{RGB}{204,121,167}
\definecolor{cseq1}{RGB}{0,114,178}
\definecolor{cseq2}{RGB}{213,94,0}
\definecolor{cseq3}{RGB}{0,158,115}
\definecolor{cseq4}{RGB}{204,121,167}
\definecolor{cseq5}{RGB}{230,159,0}
\definecolor{cseq6}{RGB}{86,180,233}
\definecolor{cseq7}{RGB}{140,60,110}
\definecolor{cseq8}{RGB}{0,70,110}
\definecolor{cflmbase}{RGB}{30,30,30}
\definecolor{cchurn1}{RGB}{230,159,0}
\definecolor{cchurn2}{RGB}{86,180,233}
\definecolor{ccobitsn}{RGB}{86,180,233}
\definecolor{cchurnwin}{RGB}{0,158,115}
\definecolor{cchurnfz}{RGB}{230,159,0}
\definecolor{cchurncln}{RGB}{0,158,115}

\pgfplotsset{
  paperaxis/.style={
    scale only axis,
    axis line style={line width=0.5pt, color=black!70},
    tick style={line width=0.5pt, color=black!70},
    tick align=outside,
    tick pos=left,
    grid=major,
    grid style={line width=0.35pt, color=black!12},
    label style={font=\small},
    tick label style={font=\footnotesize},
    legend style={
      font=\footnotesize,
      draw=black!25,
      line width=0.4pt,
      fill=white,
      fill opacity=0.9,
      text opacity=1,
      inner sep=2.5pt,
      row sep=0.5pt,
    },
    legend cell align=left,
    every axis title/.style={font=\small, yshift=1pt},
  },
}

\tikzset{
  panellabel/.style={font=\small\bfseries, anchor=south west, align=left,
                     xshift=-1.2em, yshift=1.5pt},
}
 
\usepackage[T1]{fontenc}
\usepackage{amsmath}
\usepackage{booktabs}
\usepackage{pgfplots}
\pgfplotsset{compat=1.18}
\usepgfplotslibrary{groupplots}
\usepgfplotslibrary{fillbetween}
\usetikzlibrary{patterns}
\definecolor{cinfo}{RGB}{0,114,178}
\definecolor{ccluster}{RGB}{213,94,0}
\definecolor{call}{RGB}{60,60,60}
\definecolor{caccent}{RGB}{0,158,115}
\definecolor{cmuted}{RGB}{140,140,140}
\definecolor{chighlight}{RGB}{204,121,167}
\definecolor{cratio}{RGB}{213,94,0}
\definecolor{cguide}{RGB}{120,120,120}
\definecolor{ccobit}{RGB}{0,114,178}
\definecolor{cclust}{RGB}{213,94,0}
\definecolor{cclust32}{RGB}{0,158,115}
\definecolor{cclust_b}{RGB}{204,121,167}
\definecolor{cdata}{RGB}{90,90,90}
\definecolor{crand}{RGB}{204,121,167}
\definecolor{cembed}{RGB}{204,121,167}
\definecolor{cseq1}{RGB}{0,114,178}
\definecolor{cseq2}{RGB}{213,94,0}
\definecolor{cseq3}{RGB}{0,158,115}
\definecolor{cseq4}{RGB}{204,121,167}
\definecolor{cseq5}{RGB}{230,159,0}
\definecolor{cseq6}{RGB}{86,180,233}
\definecolor{cseq7}{RGB}{140,60,110}
\definecolor{cseq8}{RGB}{0,70,110}
\definecolor{cflmbase}{RGB}{30,30,30}

\pgfplotsset{
  paperaxis/.style={
    scale only axis,
    axis line style={line width=0.5pt, color=black!70},
    tick style={line width=0.5pt, color=black!70},
    tick align=outside,
    tick pos=left,
    grid=major,
    grid style={line width=0.35pt, color=black!12},
    label style={font=\small},
    tick label style={font=\footnotesize},
    legend style={
      font=\footnotesize,
      draw=black!25,
      line width=0.4pt,
      fill=white,
      fill opacity=0.9,
      text opacity=1,
      inner sep=2.5pt,
      row sep=0.5pt,
    },
    legend cell align=left,
    every axis title/.style={font=\small, yshift=1pt},
  },
}

\tikzset{
  panellabel/.style={font=\small\bfseries, anchor=south west, align=left,
                     xshift=-1.2em, yshift=1.5pt},
}

\usepackage[T1]{fontenc}
\usepackage{amsmath}
\usepackage{amssymb}
\usepackage{bm}
\usepackage{tikz}
\usetikzlibrary{arrows.meta}
\usetikzlibrary{decorations.pathreplacing}
\usetikzlibrary{positioning}
\usetikzlibrary{shapes.geometric}
\definecolor{cinfo}{RGB}{0,114,178}
\definecolor{ccluster}{RGB}{213,94,0}
\definecolor{caccent}{RGB}{0,158,115}

\tikzset{
  hcd/ptitle/.style ={font=\scriptsize\bfseries, anchor=north west, text=black!80},
  hcd/stitle/.style ={font=\tiny\bfseries, anchor=north west, text=black!80},
  hcd/note/.style   ={font=\tiny, text=black!60},
  hcd/word/.style   ={draw=cinfo!45, line width=0.4pt, fill=cinfo!12,
                      rounded corners=1pt, minimum width=0.72cm,
                      minimum height=0.30cm, inner sep=0.5pt, font=\tiny},
  hcd/clus/.style   ={draw=ccluster!50, line width=0.4pt, fill=ccluster!15,
                      rounded corners=1pt, minimum width=0.72cm,
                      minimum height=0.30cm, inner sep=0.5pt, font=\tiny},
  hcd/ar/.style     ={-{Stealth[length=3.6pt]}, draw=black!55, line width=0.5pt,
                      shorten >=1pt, shorten <=1pt},
  hcd/thin/.style   ={-{Stealth[length=2.8pt]}, draw=black!40, line width=0.35pt,
                      shorten >=1.4pt, shorten <=1.4pt},
  hcd/net/.style    ={draw=black!55, line width=0.6pt, fill=black!4,
                      rounded corners=2pt, align=center, font=\scriptsize},
  hcd/box/.style    ={draw=black!15, rounded corners=2pt, fill=black!2,
                      inner sep=3pt, font=\tiny},
  hcd/panel/.style  ={draw=black!12, rounded corners=2.5pt, fill=black!2,
                      inner sep=0pt},
  hcd/frame/.style  ={draw=black!18, rounded corners=2pt, inner sep=0pt},
  hcd/sep/.style    ={draw=black!15, line width=0.4pt},
  hcd/dot/.style    ={circle, inner sep=0pt, minimum size=2.2pt},
  hcd/ring/.style   ={draw=black!35, densely dashed, line width=0.35pt, ellipse},
}

\newcommand{\HcdCol}[4]{%
  \fill[#4, draw=ccluster!70, line width=0.3pt] (#1,#2) rectangle ++(0.22,0.22);
  \draw[white, line width=0.35pt] (#1,#2+0.11) -- ++(0.22,0);
  \fill[#3, draw=cinfo!60, line width=0.3pt] (#1,#2+0.26) rectangle ++(0.22,0.44);
  \foreach \k in {1,2,3}
    \draw[white, line width=0.35pt] (#1,#2+0.26+0.11*\k) -- ++(0.22,0);
}
\newcommand{\HcdTokCell}[2]{%
  \fill[cinfo!22, draw=cinfo!60, line width=0.3pt] (#1,#2) rectangle ++(0.22,0.44);
  \foreach \k in {1,2,3}
    \draw[white, line width=0.35pt] (#1,#2+0.11*\k) -- ++(0.22,0);
}
\newcommand{\HcdClusCell}[2]{%
  \fill[ccluster!30, draw=ccluster!70, line width=0.3pt] (#1,#2) rectangle ++(0.22,0.22);
  \draw[white, line width=0.35pt] (#1,#2+0.11) -- ++(0.22,0);
}

\usepackage[T1]{fontenc}
\usepackage{amsmath}
\usepackage{booktabs}
\usepackage{pgfplots}
\pgfplotsset{compat=1.18}
\usepgfplotslibrary{groupplots}
\usepgfplotslibrary{fillbetween}
\usetikzlibrary{patterns}
\usetikzlibrary{arrows.meta}
\definecolor{cinfo}{RGB}{0,114,178}
\definecolor{ccluster}{RGB}{213,94,0}
\definecolor{call}{RGB}{60,60,60}
\definecolor{caccent}{RGB}{0,158,115}
\definecolor{cmuted}{RGB}{140,140,140}
\definecolor{chighlight}{RGB}{204,121,167}
\definecolor{cratio}{RGB}{213,94,0}
\definecolor{cguide}{RGB}{120,120,120}
\definecolor{ccobit}{RGB}{0,114,178}
\definecolor{cclust}{RGB}{213,94,0}
\definecolor{cclust32}{RGB}{0,158,115}
\definecolor{cclust_b}{RGB}{204,121,167}
\definecolor{cdata}{RGB}{90,90,90}
\definecolor{crand}{RGB}{204,121,167}
\definecolor{cembed}{RGB}{204,121,167}
\definecolor{cprefix}{RGB}{0,158,115}
\definecolor{cseq1}{RGB}{0,114,178}
\definecolor{cseq2}{RGB}{213,94,0}
\definecolor{cseq3}{RGB}{0,158,115}
\definecolor{cseq4}{RGB}{204,121,167}
\definecolor{cseq5}{RGB}{230,159,0}
\definecolor{cseq6}{RGB}{86,180,233}
\definecolor{cseq7}{RGB}{140,60,110}
\definecolor{cseq8}{RGB}{0,70,110}
\definecolor{cflmbase}{RGB}{30,30,30}
\definecolor{cchurn1}{RGB}{230,159,0}
\definecolor{cchurn2}{RGB}{86,180,233}
\definecolor{ccobitsn}{RGB}{86,180,233}
\definecolor{cchurnwin}{RGB}{0,158,115}
\definecolor{cchurnfz}{RGB}{230,159,0}
\definecolor{cchurncln}{RGB}{0,158,115}
\definecolor{cflmdet}{RGB}{0,114,178}
\definecolor{chflmsto}{RGB}{213,94,0}

\pgfplotsset{
  paperaxis/.style={
    scale only axis,
    axis line style={line width=0.5pt, color=black!70},
    tick style={line width=0.5pt, color=black!70},
    tick align=outside,
    tick pos=left,
    grid=major,
    grid style={line width=0.35pt, color=black!12},
    label style={font=\small},
    tick label style={font=\footnotesize},
    legend style={
      font=\footnotesize,
      draw=black!25,
      line width=0.4pt,
      fill=white,
      fill opacity=0.9,
      text opacity=1,
      inner sep=2.5pt,
      row sep=0.5pt,
    },
    legend cell align=left,
    every axis title/.style={font=\small, yshift=1pt},
  },
}

\tikzset{
  panellabel/.style={font=\small\bfseries, anchor=south west, align=left,
                     xshift=-1.2em, yshift=1.5pt},
}

\usepackage[T1]{fontenc}
\usepackage{amsmath}
\usepackage{booktabs}
\usepackage{pgfplots}
\pgfplotsset{compat=1.18}
\usepgfplotslibrary{groupplots}
\usepgfplotslibrary{fillbetween}
\usetikzlibrary{patterns}
\usetikzlibrary{arrows.meta}
\definecolor{cinfo}{RGB}{0,114,178}
\definecolor{ccluster}{RGB}{213,94,0}
\definecolor{call}{RGB}{60,60,60}
\definecolor{caccent}{RGB}{0,158,115}
\definecolor{cmuted}{RGB}{140,140,140}
\definecolor{chighlight}{RGB}{204,121,167}
\definecolor{cratio}{RGB}{213,94,0}
\definecolor{cguide}{RGB}{120,120,120}
\definecolor{ccobit}{RGB}{0,114,178}
\definecolor{cclust}{RGB}{213,94,0}
\definecolor{cclust32}{RGB}{0,158,115}
\definecolor{cclust_b}{RGB}{204,121,167}
\definecolor{cdata}{RGB}{90,90,90}
\definecolor{crand}{RGB}{204,121,167}
\definecolor{cembed}{RGB}{204,121,167}
\definecolor{cseq1}{RGB}{0,114,178}
\definecolor{cseq2}{RGB}{213,94,0}
\definecolor{cseq3}{RGB}{0,158,115}
\definecolor{cseq4}{RGB}{204,121,167}
\definecolor{cseq5}{RGB}{230,159,0}
\definecolor{cseq6}{RGB}{86,180,233}
\definecolor{cseq7}{RGB}{140,60,110}
\definecolor{cseq8}{RGB}{0,70,110}
\definecolor{cflmbase}{RGB}{30,30,30}
\definecolor{cchurn1}{RGB}{230,159,0}
\definecolor{cchurn2}{RGB}{86,180,233}
\definecolor{ccobitsn}{RGB}{86,180,233}
\definecolor{cchurnwin}{RGB}{0,158,115}
\definecolor{cchurnfz}{RGB}{230,159,0}
\definecolor{cchurncln}{RGB}{0,158,115}

\pgfplotsset{
  paperaxis/.style={
    scale only axis,
    axis line style={line width=0.5pt, color=black!70},
    tick style={line width=0.5pt, color=black!70},
    tick align=outside,
    tick pos=left,
    grid=major,
    grid style={line width=0.35pt, color=black!12},
    label style={font=\small},
    tick label style={font=\footnotesize},
    legend style={
      font=\footnotesize,
      draw=black!25,
      line width=0.4pt,
      fill=white,
      fill opacity=0.9,
      text opacity=1,
      inner sep=2.5pt,
      row sep=0.5pt,
    },
    legend cell align=left,
    every axis title/.style={font=\small, yshift=1pt},
  },
}

\tikzset{
  panellabel/.style={font=\small\bfseries, anchor=south west, align=left,
                     xshift=-1.2em, yshift=1.5pt},
}

\title{Denoising Hierarchical Representations: Joint Continuous Diffusion for Language Modeling}

\author{Mathias Ollu \\
Ecole Polytechnique\thanks{Work done at Archimedes, Athena RC, Greece}\\
\texttt{mathias.ollu}@polytechnique.edu
\And
Nikos Komodakis \\
Archimedes, Athena RC\\
University of Crete\\
IACM-Forth\\
\texttt{komod}@csd.uoc.gr
}

\definecolor{cinfo}{RGB}{0,114,178}
\definecolor{ccluster}{RGB}{213,94,0}
\definecolor{call}{RGB}{60,60,60}
\definecolor{caccent}{RGB}{0,158,115}
\definecolor{cmuted}{RGB}{140,140,140}
\definecolor{chighlight}{RGB}{204,121,167}
\definecolor{cratio}{RGB}{213,94,0}
\definecolor{cguide}{RGB}{120,120,120}

\usepackage[utf8]{inputenc}
\usepackage[T1]{fontenc}
\usepackage{microtype}
\usepackage{amsmath, amssymb, amsfonts, mathtools}
\usepackage{booktabs}
\usepackage{array}
\usepackage{tabularx}
\usepackage{longtable}
\usepackage{multirow}
\usepackage{graphicx}
\usepackage{subcaption}
\usepackage{xcolor}
\usepackage{ragged2e}
\usepackage{enumitem}
\usepackage{algorithm}
\usepackage{algpseudocode}
\usepackage{siunitx}
\usepackage{wrapfig}
\usepackage{url}
\usepackage{hyperref}
\usepackage{placeins}
\usepackage[capitalize,noabbrev]{cleveref}
\usepackage{tikz}
\usetikzlibrary{
    positioning, 
    arrows.meta, 
    shapes.geometric, 
    fit,
    calc,                     
    decorations.pathreplacing,
    backgrounds               
}

\usepackage{amsmath, amssymb, amsfonts, amsthm, mathtools}
\newtheorem{proposition}{Proposition}

\newcommand{\clip}{\operatorname{clip}}

\newcommand{\para}[1]{\vspace{0.25em}\noindent\textbf{#1}}
\tikzset{
  panellabel/.style={font=\small\bfseries, anchor=south west,
                     xshift=-1.2em, yshift=1.5pt},
}
\usepackage{tikz}

\pgfplotsset{
  paperaxis/.style={
    scale only axis,
    axis line style={line width=0.5pt, color=black!70},
    tick style={line width=0.5pt, color=black!70},
    tick align=outside,
    tick pos=left,
    grid=major,
    grid style={line width=0.35pt, color=black!12},
    label style={font=\small},
    tick label style={font=\footnotesize},
    legend style={
      font=\footnotesize,
      draw=black!25,
      line width=0.4pt,
      fill=white,
      fill opacity=0.9,
      text opacity=1,
      inner sep=2.5pt,
      row sep=0.5pt,
    },
    legend cell align=left,
    every axis title/.style={font=\small, yshift=1pt},
  },
}

\iclrfinalcopy 
\begin{document}

\maketitle

\begin{abstract}
Diffusion Language Models (DLMs) hold the promise of order-agnostic, parallel text generation. Recently, continuous diffusion and flow matching models have seen substantial gains, driven by carefully crafted token representations and diffusion/flow spaces. In this work, we introduce 
\textbf{Hierarchical Continuous Diffusion Language Models} (H-CDLMs), 
a simple 
framework
that further improves continuous DLMs with minimal compute and parameter overhead. Drawing on the discrete DLM and continuous image diffusion literature on \emph{joint diffusion}, we diffuse multiple modalities in parallel. These modalities represent tokens at different semantic granularities: in our instantiation, the tokens themselves and coarser \emph{clusters} obtained by clustering pretrained token embeddings.
We propose a general setup that allows per-modality samplers and 
schedules
to enhance the interplay between modalities.
Applied to CoBit, this yields H-CoBit, which delivers large empirical gains 
across benchmarks. 
At dataset entropy, H-CoBit improves MAUVE and reaches a generative perplexity (GenPPL) of 49.4 on LM1B and 50.4 on OWT, improving on the baseline by 24.2 and 20.7 points and surpassing even discrete DLMs of comparable size. On GSM8K, it reaches 27.4\% accuracy, outperforming prior continuous diffusion and flow-based models.
We further apply H-CDLM to the flow matching model FLM, obtaining consistent gains with H-FLM and demonstrating that the framework generalizes across continuous generative paradigms. Our code will be made publicly available at \href{https://github.com/matol-16/HCDLM.git}{Github}.

\end{abstract}

\begin{center}
    \nopagebreak 
\begin{tikzpicture}
  \begin{groupplot}[
    group style={group size=2 by 1, horizontal sep=1.45cm, vertical sep=1.15cm},
  ]
    \nextgroupplot[
      paperaxis,
      width=4.908743896499239cm,
      height=3.2445cm,
      xmin=0.86,
      xmax=3.40669,
      ymin=-0.5,
      ymax=2.65,
      xlabel={compute saving \ (better $\rightarrow$)},
      axis lines*=left,
      xtick={1,1.5,2,3},
      xticklabels={$1\times$,$1.5\times$,$2\times$,$3\times$},
      xminorticks=false,
      ytick={0,1,2},
      yticklabels={GSM8K,OpenWebText,LM1B},
      ytick style={draw=none},
      y tick label style={font=\small},
      xmajorgrids=true,
      ymajorgrids=false,
    ]
    \draw[cdata, line width=0.7pt, densely dashed] (axis cs:1,-0.5) -- (axis cs:1,2.65);
    \node[circle, fill=cdata, inner sep=2pt] at (axis cs:1,2) {};
    \node[circle, fill=cseq1, inner sep=2.6pt] at (axis cs:0.971534,2) {};
    \node[cseq1, font=\small, anchor=west] at (axis cs:1.055,2) {$\approx$ same compute};
    \draw[cseq3, line width=1.4pt, -{Stealth[length=5pt,width=4pt]}, shorten <=2.5pt, shorten >=4pt] (axis cs:1,1) -- (axis cs:1.91855,1);
    \node[circle, fill=cdata, inner sep=2pt] at (axis cs:1,1) {};
    \node[circle, fill=cseq3, inner sep=2.6pt] at (axis cs:1.91855,1) {};
    \node[cseq3, font=\small, anchor=south west] at (axis cs:1.055,1.12) {1.9$\times$ less compute};
    \draw[cseq2, line width=1.4pt, -{Stealth[length=5pt,width=4pt]}, shorten <=2.5pt, shorten >=4pt] (axis cs:1,0) -- (axis cs:2.58083,0);
    \node[circle, fill=cdata, inner sep=2pt] at (axis cs:1,0) {};
    \node[circle, fill=cseq2, inner sep=2.6pt] at (axis cs:2.58083,0) {};
    \node[cseq2, font=\small, anchor=south west] at (axis cs:1.055,0.12) {2.6$\times$ less compute};
    \nextgroupplot[
      paperaxis,
      width=4.908743896499239cm,
      height=3.2445cm,
      xmin=-1.5,
      xmax=38.8977,
      ymin=-0.5,
      ymax=2.65,
      xlabel={relative quality gain \ (better $\rightarrow$)},
      axis lines*=left,
      xtick={0,10,20,30},
      xticklabels={0\%,10\%,20\%,30\%},
      xminorticks=false,
      ytick={0,1,2},
      yticklabels={},
      ytick style={draw=none},
      y tick label style={font=\small},
      xmajorgrids=true,
      ymajorgrids=false,
      clip=false,
    ]
    \draw[cdata, line width=0.7pt, densely dashed] (axis cs:0,-0.5) -- (axis cs:0,2.65);
    \draw[cseq1, line width=1.4pt, -{Stealth[length=5pt,width=4pt]}, shorten <=2.5pt, shorten >=4pt] (axis cs:0,2) -- (axis cs:32.9641,2);
    \node[circle, fill=cdata, inner sep=2pt] at (axis cs:0,2) {};
    \node[circle, fill=cseq1, inner sep=2.6pt] at (axis cs:32.9641,2) {};
    \node[cseq1, font=\small\bfseries, anchor=south west, inner sep=1.5pt] at (axis cs:0.727158,2.08) {$-$33\% GenPPL};
    \node[cseq1, font=\scriptsize, anchor=north west, inner sep=1.5pt] at (axis cs:0.727158,1.92) {73.6 $\rightarrow$ 49.4};
    \draw[cseq3, line width=1.4pt, -{Stealth[length=5pt,width=4pt]}, shorten <=2.5pt, shorten >=4pt] (axis cs:0,1) -- (axis cs:29.1472,1);
    \node[circle, fill=cdata, inner sep=2pt] at (axis cs:0,1) {};
    \node[circle, fill=cseq3, inner sep=2.6pt] at (axis cs:29.1472,1) {};
    \node[cseq3, font=\small\bfseries, anchor=south west, inner sep=1.5pt] at (axis cs:0.727158,1.08) {$-$29\% GenPPL};
    \node[cseq3, font=\scriptsize, anchor=north west, inner sep=1.5pt] at (axis cs:0.727158,0.92) {71.1 $\rightarrow$ 50.4};
    \draw[cseq2, line width=1.4pt, -{Stealth[length=5pt,width=4pt]}, shorten <=2.5pt, shorten >=4pt] (axis cs:0,0) -- (axis cs:8.05132,0);
    \node[circle, fill=cdata, inner sep=2pt] at (axis cs:0,0) {};
    \node[circle, fill=cseq2, inner sep=2.6pt] at (axis cs:8.05132,0) {};
    \node[cseq2, font=\small\bfseries, anchor=south west, inner sep=1.5pt] at (axis cs:0.727158,0.08) {$+$8\% accuracy};
    \node[cseq2, font=\scriptsize, anchor=north west, inner sep=1.5pt] at (axis cs:0.727158,-0.08) {25.4\% $\rightarrow$ 27.4\% ($+$2\,pt)};
    \node[circle, fill=cdata, inner sep=2pt, yshift=4.5pt] at (rel axis cs:0.42,1) {};
    \node[anchor=base west, yshift=1.5pt, font=\small, inner sep=1pt] at (rel axis cs:0.447,1) {CoBit};
    \node[circle, fill=black!85, inner sep=2.6pt, yshift=4.5pt] at (rel axis cs:0.7,1) {};
    \node[anchor=west, yshift=4.5pt, align=left, font=\small, inner sep=1pt] at (rel axis cs:0.728,1) {H-CoBit \\ (coloured)};
  \end{groupplot}
  \node[panellabel] at (group c1r1.north west) {Faster};
  \node[panellabel] at (group c2r1.north west) {Better};
\end{tikzpicture}
    \captionof{figure}{\textbf{Better generations for less compute.} For each dataset, \textbf{(left)} the compute H-CoBit (ours) needs to match/surpass \emph{CoBit's best results}, relative to CoBit's own cost. Compute is each arm's measured per-denoiser-call cost times its NFE; \textbf{(right)} the quality gain for optimal samplers, Gen-PPL is read at dataset entropy, GSM8K reports accuracy. LM1B is not budget-matched (CoBit 1M steps vs H-CoBit 600k).}

  \label{fig:teaser}
  \vspace{-6pt}
\end{center}

\section{Introduction}
\vspace{-4pt}
Recent progress in language modeling \citep{touvron2023llama2openfoundation} was enabled by the simple and scalable auto-regressive (AR) factorization that supports left-to-right generation via next-token prediction. This sequential nature, however, precludes parallelization and can hinder performance on non-left-to-right tasks. Diffusion Language Models (DLMs) offer a promising alternative by denoising entire sequences in parallel \citep{austin2021d3pm, sahoo2024mdlm}. Yet realizing their potential is difficult: diffusion (or flow) architectures lack a single canonical training recipe, and neither discrete nor continuous DLMs have fully closed the gap with AR models despite success in other modalities.
A key design choice is how to represent and corrupt tokens. Discrete approaches such as Masked Diffusion Language Models (MDLM) have narrowed the gap \citep{sahoo2024mdlm, austin2021d3pm, sahoo2025duo}. Continuous formulations are harder: the design space is larger, and standard Gaussian diffusion with linear schedules is poorly matched to discrete data \citep{shabalin2026gaussiandiffusionmodelsfail}. Recent work has nevertheless closed much of this gap through carefully chosen continuous token representations and adaptive noise schedules \citep{chen2026langflow, lee2026flm}.

Our work builds on two independent recent innovations. First, the parallel diffusion of multiple modalities has yielded improvements in the image generation literature. \citet{kouzelis2025boosting} and \citet{latentforcing} showed that diffusing both DINOv2 features and pixels with a single network improves performance. Second, \cite{zhou2025next} have shown that predicting higher level representations of tokens before the tokens themselves in a discrete diffusion setup improves generation quality. 

Motivated by these results, we introduce \emph{Hierarchical Continuous Diffusion Language Models} (H-CDLMs), which augment continuous diffusion language models with auxiliary modalities that represent tokens at coarser semantic granularities.
In the simplest instantiation, which we study 
in this work, we use two modalities: the original token sequence and the corresponding sequence of cluster IDs, obtained by clustering pretrained token embeddings.
We embed the cluster IDs into a continuous space and \emph{jointly} diffuse and denoise both modalities with a single model. At inference, the samplers and schedules of the two modalities can be tuned independently to optimize their interaction. 

The cluster modality is especially well suited to continuous text diffusion. Token embeddings encode rich linguistic and semantic structure: tokens that play similar roles tend to occupy nearby regions of the embedding space, and hence the same cluster. Under continuous noise, recovering the exact embedding of a token can be difficult, but identifying the region in which it lies is often still feasible. Cluster IDs supply exactly this coarse-grained signal. By indicating which region of the embedding space a token belongs to, they narrow the set of plausible tokens without committing to a specific one (Figure~\ref{fig:method_figure}). The cluster modality thus acts as a semantic anchor that steers the denoising trajectory toward the target token.
Crucially, because the two modalities are denoised in parallel, information flows bidirectionally across semantic granularities: coarse cluster predictions can guide token recovery under high noise, while token-level predictions simultaneously refine the cluster estimates.

H-CDLM adds little parameter or computational overhead when the number of clusters is small relative to the vocabulary, and can be applied directly to existing continuous DLMs. We apply it to CoBit, a state-of-the-art continuous DLM that uses bit representations to reduce output-head size. 
The resulting model, H-CoBit,  
delivers large gains in both generation quality and efficiency. 
We additionally apply H-CDLM to FLM \citep{lee2026flm}, a continuous flow matching language model, where the resulting H-FLM also provides consistent improvements over the baseline, 
highlighting the cross-paradigm applicability of our framework.

Our contributions are:
\begin{enumerate}[leftmargin=20pt, topsep=0pt, parsep=0pt]
\item We introduce H-CDLM, a framework for jointly diffusing hierarchical textual modalities in continuous DLMs, with independent samplers and noise schedules per modality.
\item We instantiate the framework using two modalities constructed from pretrained token embeddings, primarily on CoBit (yielding H-CoBit), and additionally on the flow matching model FLM (yielding H-FLM) to demonstrate applicability beyond diffusion.
\item We show that H-CoBit delivers large gains across benchmarks (Figure~\ref{fig:teaser}). At entropy matched to that of the data, it reduces generative perplexity (GenPPL) from 73.6 to 49.4 on LM1B and from 71.1 to 50.4 on OpenWebText (OWT), improvements of 24.2 and 20.7 points over CoBit that surpass even discrete DLMs, while also improving MAUVE on both datasets. It matches CoBit's performance at a substantially lower budget, and it reaches 27.4\% accuracy on GSM8K, outperforming all prior continuous diffusion and flow language models. A matched random-cluster control fails to recover these gains, confirming that they stem from meaningful cluster semantics rather than added capacity or auxiliary structure.
\item We show that H-FLM also consistently improves over FLM, confirming that the benefits of H-CDLM transfer across continuous generative paradigms.
\end{enumerate}

More broadly, H-CDLMs offer a path toward language models that denoise text at multiple levels of abstraction at once, from tokens to progressively coarser clusters and potentially to entire sentences. We view our two-level instantiation as a first step in this direction.

\section{Related work}
\label{sec:related_work}

\paragraph{Discrete Diffusion Models.}
Discrete Diffusion Models operate directly in the discrete space of tokens. They rest on a Markov chain which defines the corruption mechanism. D3PM \citep{austin2021d3pm} proposes a general framework for such processes. Corruption mechanisms range from masking to uniform token transitions. MDLM \citep{sahoo2024mdlm} first demonstrated strong generation quality for text by simplifying D3PM on masked diffusion, while Duo \citep{sahoo2025duo} showed that uniform discrete architectures can be equally good, and drew a connection to Gaussian diffusion of one hot vectors. HDLM \citep{zhou2025next} extends MDLM by building a hierarchical vocabulary of tokens, allowing for a fine grained unmasking process. TDLM \citep{wu2026rethinkingtokenpredictiontreestructured} drops HDLM's full vocabulary prediction by only predicting the next scale semantics along a hierarchical tree diffusion process -- thus considerably increasing throughput. Our work can be seen as a continuous analog of HDLM (and TDLM), although we diffuse all modalities jointly rather than unmasking clusters before tokens. Another line of work augments masked discrete diffusion with a paired continuous diffusion process: tokens are corrupted by both masking and Gaussian noise, and a single model learns to reverse both processes jointly \citep{pynadath2025candi, zhou2026coevolutionarycontinuousdiscretediffusion, continuously-augmented}.

\paragraph{Continuous Diffusion and Flow Language Models.}

Continuous Diffusion or Flow Language Models corrupt continuous representations of tokens, and train a denoiser network to reconstruct clean token embedding sequences from noisy ones. Recently, LangFlow \citep{chen2026langflow} and FLM/FMLM \citep{lee2026flm} have shown that, when carefully designed, such processes can yield similar performance to their discrete diffusion counterparts. FLM embeds tokens as one hot vectors and directly diffuses in the one hot vector space, while LangFlow exploits trained embeddings. Both models however train their denoiser to predict tokens directly and then pull the token's representation backwards into the diffusion process, entailing a large vocabulary sized output head and low throughput.
ELF \citep{hu2026elf} shows that continuous DLMs can be effective with \emph{minimal adaptation} to the discrete nature of the data, and especially without predicting actual tokens. This allows a smaller output head whose size depends on the hidden dimension rather than the vocabulary size. S-FLM \citep{deschenaux2026languagemodelinghypersphericalflows} leverages Flow Matching on the hypersphere for improved token representations, with strong performance on downstream tasks.  CoBit \citep{cobit} exploits a bit representation of tokens, previously introduced by \citet{chen2022analogbits}, alongside a carefully designed parametrization and stochastic sampling. This further reduces output head size and yields the strongest results known for continuous DLMs.

\paragraph{Multimodal Joint Diffusion.}
Multimodal joint diffusion diffuses multiple representations of a data sample in parallel, via a single denoiser network. For images, this can consist of diffusing pixels and DinoV2 \citep{oquab2024dinov} features jointly, and keeping only the generated pixels at the end of the sampling process \citep{kouzelis2025boosting, latentforcing}. This augmentation yields generation quality gains \citep{kouzelis2025boosting}, especially when coupled with an ordered scheduler which enhances the interplay between high and low level data features through the sampling process \citep{latentforcing}. This framework rests on the idea that diffusing high and low level features jointly provides a form of guidance to the denoiser. It has not, to the best of our knowledge, been extended to continuous DLMs yet. \cite{onefitsall} jointly diffuse image and text pairs, enabling unconditional text generation, but they do not evaluate this specific task.

\section{Background}
\label{sec:background}

In this section, we present a general setting for continuous diffusion and flow models for text. Our setting covers existing continuous flow and diffusion language models \citep{lee2026flm,  chen2026langflow,cobit, hu2026elf, deschenaux2026languagemodelinghypersphericalflows}, although they differ in their token representation, corruption parametrization, training objective and sampler. One exception is $\mathcal{S}$-FLM \citep{deschenaux2026languagemodelinghypersphericalflows}, whose corruption is a geodesic interpolation on the hypersphere. Details on how this setup applies are given in Appendix \ref{app:extended_background}.

\subsection{Continuous Diffusion Of Discrete Tokens}

We consider text sequences composed of $L$ tokens over a vocabulary $\mathcal{V}$. We aim to approximate the distribution $p_{\mathrm{data}} : \mathcal{V}^L \to{} [0,1]$ of a sequence $y \in \mathcal{V}^L$ via the learned model $p_{\theta}(y)$. 

Continuous diffusion first requires a continuous representation in a space $E^d$ of the vocabulary via the encoder and decoder functions $\mathcal{E}: y \to x$ and $\mathcal{D}: x \to{} y$, verifying $\mathcal{D}(\mathcal{E}(y))=y$, with $d$ the embedding dimension. Generally, we set $E=\mathbb{R}$, although recent work also explores hyperspherical embeddings, that is $E=\mathbb{S}$ \citep{deschenaux2026languagemodelinghypersphericalflows}. We write $x_0 = \mathcal{E}(y) \in \mathbb{R}^{L \times d}$, applying $\mathcal{E}$ position-wise. The diffusion process lives in $E^{L \times d}$, following the Gaussian corruption process
\begin{align}
\label{equ:forward}
    x_{\sigma} = x_0 + \sigma \epsilon,
    \qquad \epsilon \sim \mathcal{N}(\epsilon; 0, I_{Ld}),
    \qquad \sigma \in [\sigma_{\min}, \sigma_{\max}].
\end{align}
Note that, without loss of generality, we adopt the variance-exploding (EDM) convention \citep{karras2022edm}. Flow matching formulations use general parametrizations $x_t = \alpha_t x_0 + \sigma_t \epsilon$ \citep{chen2026langflow, lee2026flm}, and reduce to \eqref{equ:forward} by rescaling (see explicit correspondence in Appendix \ref{app:extended_background}). We consider a denoiser that predicts the clean data embeddings $ D_{\theta}(x_{\sigma}, \sigma) \approx \mathbb{E}[x_0 | x_{\sigma}] \in E^{L \times d}.$
\subsection{Training}

Due to the conditional expectation formulation of $D_{\theta}(x_{\sigma}, \sigma)$, the natural training objective of this model is solving the regression problem $D= \operatorname{argmin}_{\hat{D}}\mathcal{L}_{MSE}(\hat{D})$, with
\begin{align}
\label{equ:mse}
    \mathcal{L}_{\mathrm{MSE}}
    = \mathbb{E}_{x_0,\, \epsilon,\, \sigma \sim p_{\mathrm{train}}}
    \Big[
        \frac{w(\sigma)}{L} \sum_{i=1}^{L}
        \big\| D_{\theta}(x_{\sigma}, \sigma)_i - x_{0, i} \big\|^{2}
    \Big],
\end{align}
where $p_{\mathrm{train}}$ is the training noise density and $w(\sigma)$
a weighting function; both control how learning effort is allocated
across noise levels \citep{karras2022edm}. Recent work replaces or augments the regression loss with other training objectives, including cross entropy loss \citep{chen2026langflow, lee2026flm, deschenaux2026languagemodelinghypersphericalflows}. We refer to Appendix \ref{app:extended_background} for further details.

\subsection{Sampling}
\label{subsec:sampling}

We sample from this general model by backward integration of the probability-flow ODE induced by the forward process in \eqref{equ:forward}, along 
\begin{align}
\label{equ:pfode}
    \frac{dx}{d\sigma}
    = -\sigma \, \nabla_x \log p_{\sigma}(x)
    = \frac{x - D_{\theta}(x, \sigma)}{\sigma},
\end{align}
using standard solvers such as Heun or Euler on a discrete noise grid $\sigma_{\max} = \sigma_0 > \sigma_1 > \dots > \sigma_N =
\sigma_{\min}$. The final step consists in decoding position-wise via $\mathcal{D}$. The underlying discrete nature of the data requires two main adjustments that decisively improve generation quality:

\paragraph{Information-uniform noise schedules.} 
For text, the conditional information about token identities is not released uniformly with noise; it is concentrated in a narrow band of noise. Following recent continuous DLMs \citep{chen2026langflow, cobit, lee2026flm, deschenaux2026languagemodelinghypersphericalflows}, we adopt information-uniform noise schedules to focus sampling on noise regions where information about the tokens changes the most.

\paragraph{Stochastic churn.}
A second lever trades the deterministic ODE for a stochastic sampler
\citep{karras2022edm}: at step $i$, the state is churned back up to
$\hat\sigma_i = (1+\gamma)\sigma_i$ before the solver integrates down to
$\sigma_{i+1}$: $\hat{x}_i = x_i+ s_{\mathrm{noise}} \sqrt{\hat\sigma_i^{2} - \sigma_i^{2}}\epsilon$, $\epsilon \sim \mathcal{N}(\epsilon; 0,I_{Ld})$. The per-step churn rate $\gamma = S_{\mathrm{churn}}/N \geq 0$ sets how much
noise is re-injected, while
$s_{\mathrm{noise}} \geq 1$ inflates its variance \citep{karras2022edm}. Unlike $\gamma$,
$s_{\mathrm{noise}}$ is a property of the sampler rather than of the
parametrization, hence model-agnostic. Churn is essential to CoBit's generation
quality \citep{cobit}, and improves FLM (see Section \ref{subsec:h_flm}); Section~\ref{subsec:dual_noise} makes $\gamma$
\emph{per-modality}. Noise truncation \citep{deschenaux2026languagemodelinghypersphericalflows}
is a further such lever; we discuss how these intertwine with our method in
Sections~\ref{subsec:dual_noise} and Appendix~\ref{app:model_instantiation}.

\section{Method}
\label{sec:method}

In this section, we present our hierarchical framework for continuous DLMs. We define a multimodal diffusion process  (Section ~\ref{subsec:hierarchical_setup}) and detail our hierarchical modalities for text (Section~\ref{subsec:hiearchical_construc}). We then show how information-uniform noise scheduling extends to a per-modality setting (Section \ref{subsec:dual_noise}).

\begin{figure*}[t]\centering
\input{figures/method_figure_v2_body}
\input{figures/method_figure_v2_caption}
  \label{fig:method_figure}
\end{figure*}

\subsection{Hierarchical continuous diffusion}
\label{subsec:hierarchical_setup}

Following the multimodal formulation of \citet{kouzelis2025boosting}, we define a joint diffusion process over $k$ modalities, operating on vocabularies $\{ \mathcal{V}^j\}_{j=1}^{k}$. In practice, we set $k=2$ to distinguish plain tokens ($j=1$) from \emph{high level} semantic units ($j=2$), or \emph{clusters}. The diffusion process then noises continuous representations $x = \{x^j\}_{j=1}^{k}$, with $x^j \in E^{L \times d_j}$, of the data $y = \{y^j\}_{j=1}^{k}$, with $y^j \in (\mathcal{V}^j)^{L}$ and $\mathcal{V}^1 = \mathcal{V}$. The modalities are concatenated along the channel axis, so that $x \in E^{L \times d}$ with $d = \sum_{j=1}^{k} d_j$. This integrates directly into an existing continuous DLM architecture (see Section~\ref{sec:background}) by concatenating the modalities'
embeddings at each position. It does not increase sequence length.
Each modality can be corrupted at its own noise level. The forward process becomes
\begin{align}
\label{equ:forward_h}
    x_{\bm{\sigma}} = x_0 + \Sigma\, \epsilon,
    \qquad
    \epsilon \sim \mathcal{N}\big(\epsilon;0,\, I_{L d}\big),
    \qquad
    \Sigma = \operatorname{blockdiag}\big(
        \sigma_1 I_{d_1}, \dots, \sigma_k I_{d_k}
    \big) \otimes I_L,
\end{align}
where $\bm{\sigma} = (\sigma_1, \dots, \sigma_k)$ collects the
per-modality noise levels. The denoiser outputs one clean-data estimate per-modality,
$D_\theta(x_{\bm{\sigma}}, \bm{\sigma}) = (D_\theta^1, \dots,
D_\theta^k)$ with $D_\theta^j \in E^{L \times d_j}$. At training time, we minimize
\begin{align}
\label{equ:hierarchical_objective}
    \mathcal{L}_{\mathrm{MSE\text{-}H}}
    = \mathbb{E}_{x_0,\, \epsilon,\, \bm{\sigma}}
    \Big[
        \frac{1}{L} \sum_{i=1}^{L} \sum_{j=1}^{k}
        \lambda_j\, w(\sigma_j)\,
        \big\| D^{j}_{\theta}(x_{\bm{\sigma}}, \bm{\sigma})_i
             - x^{j}_{0, i} \big\|^{2}
    \Big],
\end{align}
with $\lambda_j$ per-modality loss weights. Note that expectation over time/noise is taken at the global level (see Appendix \ref{app:method_details}). At sampling time, all hierarchical levels are diffused jointly. 
At the last step, all modalities except the first (token-level) are dropped, and the tokens are decoded with $\mathcal{D}_1$.
Discarding the auxiliary modalities is principled whenever they are
deterministic functions of the tokens, which is the case for the
constructions detailed in Section~\ref{subsec:hiearchical_construc} (see Proposition \ref{prop:deterministic_latent} in appendix \ref{app:method_details}).

\subsection{Constructing hierarchical modalities for text}
\label{subsec:hiearchical_construc}

Our high-level modality provides an anchor for the diffusion of tokens. We instantiate it as \emph{token clusters},
following the semantic hierarchy of \citet{zhou2025next}. This allows us to keep our multimodal data representation intrinsically discrete. Each cluster can be attributed to a \textit{semantic} group of tokens, which can be easier to infer at high noise levels than the exact token identity. In practice, a surjective map $\Gamma : \mathcal{V} \to \mathcal{C}$ assigns each token to one of
$n = |\mathcal{C}|$ clusters, and the high-level sequence is obtained
position-wise, $y^{2, l} = \Gamma(y^{1, l})$, so that both modalities
share the sequence length $L$. We compute $\Gamma$ by $k$-means++ on pretrained token embeddings taken from LangFlow \citep{chen2026langflow} (for LM1B, OWT) and $\mathbb{S}$-FLM \citep{deschenaux2026languagemodelinghypersphericalflows} (for TinyGSM). The cluster index must itself be given a continuous encoding $\mathcal{E}_2$. The most direct way to construct $\mathcal{E}_2$ is by using the same mechanism as for $\mathcal{E}_1$, making integration to an existing DLM seamless. This involves encoding clusters via bit sequences for CoBit \citep{cobit} and through one-hot vectors for FLM \citep{lee2026flm}. This choice aims at simplicity. More complex multimodal token representations are left to future work. More generally, additional modalities ($k >2$) can be obtained by clustering at multiple granularities, yielding a deeper hierarchy. We refer to Appendix \ref{app:clustering} for further details, and to \citet{wu2026rethinkingtokenpredictiontreestructured} for a $k >2$ modalities example.

\subsection{Per-Modality Scheduling And Sampling}
\label{subsec:dual_noise}
\begin{figure*}[t]
    \centering
    \setlength{\abovecaptionskip}{3pt}
\begin{tikzpicture}[
  /pgfplots/every axis/.append style={
    width=5.4cm,
    height=1.9cm,
    xmode=log,
    xmin=0.1, xmax=20,
    xtick={0.1,1,10},
    tick label style={font=\scriptsize},
    xticklabel shift=-1pt,
    typeset ticklabels with strut,
    label style={font=\scriptsize},
    legend style={font=\scriptsize, draw=none, fill=none,
                  inner sep=1pt, row sep=-2pt, nodes={inner sep=1pt}},
    ylabel shift=-3pt,
    extra x ticks={20},
    extra x tick labels={$\sigma$},
    extra x tick style={major tick length=0pt, grid=none},
  },
]
  \begin{groupplot}[
    group style={group size=2 by 1, horizontal sep=1.5cm},
  ]
    \nextgroupplot[
      paperaxis,
      ymin=0, ymax=1.3,
      ytick={0,0.5,1},
      ylabel={density $\pi(\log_{10}\sigma)$},
      legend pos=north east,
    ]
    \addplot+[color=cinfo, solid, line width=1.1pt, mark=none]
      table[row sep=\\] {
        0.102053 0\\ 0.12043 0\\ 0.142115 0.0270003\\ 0.167705 0.0614143\\
        0.197904 0.124097\\ 0.23354 0.21663\\ 0.275593 0.350713\\ 0.325219 0.521107\\
        0.383781 0.70814\\ 0.452888 0.884718\\ 0.534438 1.01762\\ 0.630674 1.08671\\
        0.744238 1.08704\\ 0.878252 1.03158\\ 1.0364 0.94357\\ 1.22302 0.842427\\
        1.44325 0.739793\\ 1.70313 0.642416\\ 2.00981 0.553862\\ 2.37171 0.475225\\
        2.79878 0.406249\\ 3.30276 0.346313\\ 3.89748 0.294632\\ 4.59929 0.250884\\
        5.42748 0.212912\\ 6.4048 0.180627\\ 7.5581 0.153356\\ 8.91907 0.129935\\
        10.5251 0.110272\\ 12.4204 0.0935189\\ 14.6569 0.0792411\\ 17.2961 0.0672152\\
        18.7889 0.061833\\
      };
    \addlegendentry{tokens}
    \addplot+[color=ccluster, dashed, line width=1.1pt, mark=none]
      table[row sep=\\] {
        0.102053 0\\ 0.12043 0\\ 0.142115 0\\ 0.167705 0\\
        0.197904 0.0519219\\ 0.23354 0.105159\\ 0.275593 0.180676\\ 0.325219 0.296876\\
        0.383781 0.442426\\ 0.452888 0.609117\\ 0.534438 0.785032\\ 0.630674 0.946056\\
        0.744238 1.06247\\ 0.878252 1.10874\\ 1.0364 1.08358\\ 1.22302 1.00607\\
        1.44325 0.903145\\ 1.70313 0.793468\\ 2.00981 0.688196\\ 2.37171 0.592391\\
        2.79878 0.507387\\ 3.30276 0.432993\\ 3.89748 0.368855\\ 4.59929 0.313545\\
        5.42748 0.266498\\ 6.4048 0.226\\ 7.5581 0.191779\\ 8.91907 0.162746\\
        10.5251 0.137988\\ 12.4204 0.116957\\ 14.6569 0.0991627\\ 17.2961 0.0841078\\
        18.7889 0.0773477\\
      };
    \addlegendentry{clusters}
    \draw[cinfo, densely dotted, line width=0.7pt] (axis cs:0.6851,0) -- (axis cs:0.6851,1.0939);
    \draw[ccluster, densely dotted, line width=0.7pt] (axis cs:0.8783,0) -- (axis cs:0.8783,1.1087);

    \nextgroupplot[
      paperaxis,
      ymin=0, ymax=1.02,
      ylabel={cumulative $F_{\mathcal{D}}(\sigma)$},
      legend pos=south east,
      ytick={0,0.5,1},
    ]
    \addplot+[color=cinfo, solid, line width=1.1pt, mark=none]
      table[row sep=\\] {
        0.102053 0\\ 0.12043 0\\ 0.142115 0.000970759\\ 0.167705 0.00482075\\
        0.197904 0.0123792\\ 0.23354 0.0262211\\ 0.275593 0.048984\\ 0.325219 0.0832611\\
        0.383781 0.130808\\ 0.452888 0.191372\\ 0.534438 0.262401\\ 0.630674 0.339639\\
        0.744238 0.418053\\ 0.878252 0.493423\\ 1.0364 0.562956\\ 1.22302 0.625394\\
        1.44325 0.680421\\ 1.70313 0.72833\\ 2.00981 0.769712\\ 2.37171 0.805252\\
        2.79878 0.835658\\ 3.30276 0.861603\\ 3.89748 0.883693\\ 4.59929 0.90249\\
        5.42748 0.918451\\ 6.4048 0.932001\\ 7.5581 0.943498\\ 8.91907 0.953247\\
        10.5251 0.961515\\ 12.4204 0.968527\\ 14.6569 0.974472\\ 17.2961 0.979511\\
        18.7889 0.981734\\
      };
    \addlegendentry{tokens}
    \addplot+[color=ccluster, dashed, line width=1.1pt, mark=none]
      table[row sep=\\] {
        0.102053 0\\ 0.12043 0\\ 0.142115 0\\ 0.167705 0\\
        0.197904 0.0031933\\ 0.23354 0.00976881\\ 0.275593 0.0211542\\ 0.325219 0.0401873\\
        0.383781 0.0692672\\ 0.452888 0.110017\\ 0.534438 0.163341\\ 0.630674 0.228561\\
        0.744238 0.303132\\ 0.878252 0.382376\\ 1.0364 0.461058\\ 1.22302 0.534972\\
        1.44325 0.601819\\ 1.70313 0.660841\\ 2.00981 0.712186\\ 2.37171 0.756459\\
        2.79878 0.794418\\ 3.30276 0.826834\\ 3.89748 0.854468\\ 4.59929 0.877973\\
        5.42748 0.897945\\ 6.4048 0.914903\\ 7.5581 0.929292\\ 8.91907 0.941493\\
        10.5251 0.951843\\ 12.4204 0.960612\\ 14.6569 0.96805\\ 17.2961 0.974359\\
        18.7889 0.97714\\
      };
    \addlegendentry{clusters}
    \draw[densely dotted, black!45, line width=0.5pt] (axis cs:0.1,0.5) -- (axis cs:1.1291,0.5);
    \draw[cinfo, densely dotted, line width=0.7pt] (axis cs:0.8923,0) -- (axis cs:0.8923,0.5);
    \draw[ccluster, densely dotted, line width=0.7pt] (axis cs:1.1291,0) -- (axis cs:1.1291,0.5);
    \node[font=\scriptsize, anchor=north west, inner sep=1.5pt] at (axis cs:0.11,0.76) {$\sigma_{50}\!:\ 0.89 \rightarrow 1.13$};
  \end{groupplot}
  \node[font=\scriptsize\bfseries, anchor=north west, inner sep=2pt]
    at (group c1r1.north west) {(a)};
  \node[font=\scriptsize\bfseries, anchor=north west, inner sep=2pt]
    at (group c2r1.north west) {(b)};
\end{tikzpicture}
\caption{Token vs.\ cluster information-uniform noise schedules for H-CoBit:
  (a) density over log-noise levels; (b) cumulative distributions, with the median
  shifting from $\sigma_{50}=0.89$ to $1.13$.}
\label{fig:noise-schedules}
\end{figure*}
A key opportunity of joint diffusion is to adapt the scheduler and sampler \emph{per-modality}, so as to enhance the interplay between modalities throughout the denoising process, and to adapt the noise schedule and sampler to each modality's underlying data structure. 

\paragraph{Per-modality scheduler.} In multimodal joint image diffusion, \citet{latentforcing} for instance order the modalities: the high level modality is kept cleaner than the pixels via per-modality schedulers, so that resolved semantics guide pixel denoising. In image diffusion, this requires setting an
explicit ordering of the modalities' schedules. Our framework offers a simpler alternative: training a \emph{per-modality} information-uniform noise schedule. For each modality $j$ and sampling step $i$, $\sigma_{i,j}$ is set so that it yields a grid on which each step reveals approximately equal conditional information about the underlying data (see Section \ref{subsec:sampling} and Appendix \ref{app:per_modality_schedul} for details on how to train $\sigma_{i,j}$).  Figure \ref{fig:noise-schedules} displays such trained schedules for 2 modalities. Cluster density is right-shifted: cluster identity is a coarser variable, so information about it survives larger noise and its entropy is released at higher $\sigma$. Figure \ref{fig:noise-schedules} shows that each modality has a different underlying structure. This motivates adapting the scheduler and sampler \emph{per-modality}. In our experiments, we set our scheduler per-modality by default, and ablate the possibility of using a single shared scheduler (see Table \ref{tab:dual_schedul}).

\paragraph{Per-modality sampler (PMS).} One direct way to adapt the sampler per-modality is to leverage stochastic sampling (see Section \ref{subsec:sampling}). We enable two key mechanisms: (i) adapt the churn rate per-modality, so that $\hat\sigma_{i,j}=(1+\gamma_i^j)\sigma_{i,j}$, $\hat{x}_i^j = x_i^j+ s_{\mathrm{noise}} \sqrt{\hat\sigma_{i,j}^{2} - \sigma_{i,j}^{2}}\epsilon$ for $\epsilon \sim \mathcal{N}(0,I)$. (ii) churn clusters on a predefined window $\mathcal{W}^2$. We set it to the first ($\mathcal{W}_{\text{early}}^2$) or last ($\mathcal{W}_{\text{late}}^2$) half of sampling steps. 
Combined, these mechanisms yield clear performance improvements in H-CDLM.
We provide a comprehensive evaluation of stochastic sampling per-modality in its H-CoBit instantiation in Appendix \ref{app:multimodal_churn}. By construction, any sampler mechanism or parameter can be tuned per-modality to improve the interplay between modalities. We leave further tuning to future work.
\section{Experiments}
\label{sec:experiments}

We mainly evaluate our framework on CoBit \citep{cobit}, a state-of-the-art (SOTA) continuous DLM in terms of downstream task accuracy and GenPPL frontier. We first present our experimental setup (Section \ref{subsec:exp_setup}), then our main training and evaluation results for H-CoBit on standard text generation and downstream tasks (Section \ref{subsec:main_results}), and our ablations (Section \ref{subsec:ablations}). We finally explore cross-paradigm transferability of H-CDLM by applying it to FLM (Section \ref{subsec:h_flm}).

\subsection{Experimental Setup}
\label{subsec:exp_setup}

\paragraph{Training.}

Following \cite{chen2026langflow, sahoo2024mdlm, sahoo2025duo, deschenaux2026languagemodelinghypersphericalflows, cobit, hu2026elf}, we train on LM1B \citep{lm1b} using 128-token sequences and the BERT-base-uncased tokenizer ($V=30522$), on OWT \citep{owt} using 1024-token sequences with the GPT-2 tokenizer ($V=50257$), and on TinyGSM \citep{liu2023tinygsm} using 512-token sequences with the SmolLM-135M tokenizer ($V=49153$).
Our denoiser is a 12-layer, 12-head Diffusion Transformer with hidden size 768, resulting in 134.6M and 135M parameters for LM1B and OWT respectively (parameter overhead of  approximately +1$\%$ compared to the CoBit baseline). All other training parameters are set equal to \cite{cobit} for H-CoBit, and \citet{lee2026flm} for H-FLM.

\paragraph{Evaluation.}
We evaluate by measuring GPT-2 Large generative perplexity (GenPPL) - entropy (H) frontiers, following \cite{sahoo2024mdlm, chen2026langflow, deschenaux2026languagemodelinghypersphericalflows, hu2026elf, cobit}. We also compute MAUVE on LM1B and OWT, and GSM8K \citep{gsm8k} accuracy for our model trained on TinyGSM. MAUVE and GenPPL are evaluated on 1024 generated sequences. All results are reported across a full sweep over Number of Function Evaluations (NFE) and sampling parameters, especially churn which is essential for CoBit. Per-modality loss weighting coefficients $\lambda_j$ are tuned for each model/dataset.
For CoBit, we keep the original paper's inference hyperparameters: global stochastic sampling noise is $s_{\text{noise}}=1.003$ on LM1B, OWT and GSM8K. We ablate $s_{\text{noise}}=1.05$, for fair comparison with our per-modality samplers on H-CoBit. We refer to Appendix \ref{app:exp_details} for further details.

\subsection{Main results}
\label{subsec:main_results}
\begin{figure*}[t]\centering
\input{figures/genppl_entropy_frontier_3panel_caption}
\input{figures/genppl_entropy_frontier_3panel_body}
  \label{fig:frontier}
  \vspace{-8pt}
\end{figure*}
Figure \ref{fig:frontier} displays improvements of our model H-CoBit in the GenPPL-entropy frontiers on LM1B and OWT, shown by right shifts of the frontier. Table \ref{tab:main_table} shows GenPPL levels at given entropies. On LM1B, H-CoBit matches dataset entropy at GenPPL 57.3, more than 20 points better than CoBit-S, at only 60\% of the training steps. On OWT at matched 750k steps, H-CoBit reaches GenPPL 67.8 at dataset entropy beating CoBit-S by 15.4 points. Adding per-modality churn further improves H-CoBit's advantage: on OWT, combined with $s_{noise}=1.05$, H-CoBit reaches Gen PPL 50.4, better than CoBit-S at same $s_{noise}$ (71.1), and than any DLM baseline at dataset entropy.
H-CoBit also improves MAUVE at dataset entropy, reaching 0.903 against 0.861 on LM1B, and 0.580 against 0.530 on OWT against CoBit (see Table \ref{tab:MAUVE}, Appendix \ref{app:add_exp}). We note H-CoBit incurs only a negligible worst case computational overhead (+4.2\%) while matching CoBit’s performance at much lower budget (see Figure~\ref{fig:teaser}, Table \ref{tab:speed}).
\begin{table*}[t]\centering
\caption{\textbf{Generation quality, diversity at one operating point.} Every row is one generation scored by GenPPL and entropy $H$. All reported continuous DLMs have $\approx$130M parameters (except ELF, 342M). 
H-CoBit is our hierarchical continuous DLM framework on CoBit-S.
H-CoBit uses cluster-loss weights $\lambda_2{=}0.27$ / $0.1$ for (LM1B / OWT). 
We report GenPPL at matched dataset entropy when reached by the model. Per-Modality Sampler (PMS) samples with tuned churn per-modality: for H-CoBit, LM1B uses $\mathcal{W}^2_{\text{late}}$, $\gamma_2=0.4$ and OWT uses $\mathcal{W}_{\text{late}}$, $\gamma_2=0.3$ + global $s_{noise}=1.05$ (compared to Cobit-S, $s_{noise}=1.05$). 
The bold is best at matched H. Baselines are taken from original papers, \citet{franca2026hackinggenerativeperplexityunconditional} or \citet{pynadath2026generativefrontiersevaluationmatters}, except for S-FLM and CoBit which we re-evaluate.}

\begingroup\small
\setlength{\tabcolsep}{3pt}
\newcommand{\tgrp}[1]{\multicolumn{9}{@{}l}{\textit{#1}}}
\newcommand{\tsub}[1]{\multicolumn{9}{@{}l}{\textit{#1}}}
\newcommand{\tind}{\hspace{0.8em}}
\definecolor{oursbg}{gray}{0.90}%
\newcommand{\tours}{\rowcolor{oursbg}}
\begin{tabular}{@{}l ccrr >{\hspace{6pt}}c crr@{}}
\toprule
 & \multicolumn{4}{c}{\textbf{LM1B} ($L{=}128$)}
 & \multicolumn{4}{c}{\textbf{OpenWebText} ($L{=}1024$)} \\
\cmidrule(lr){2-5}\cmidrule(l){6-9}
\textbf{Model}
 & Steps & NFE & $H\!\uparrow$ & GenPPL$\downarrow$
 & Steps & NFE & $H\!\uparrow$ & GenPPL$\downarrow$ \\
\midrule
Real text (test split) & -- & -- & 4.34 & 53.1 & -- & -- & 5.46 & 14.9 \\
\midrule
\tsub{\tind Autoregressive} \\
\tind\tind AR baseline \citep{franca2026hackinggenerativeperplexityunconditional}
 & & & & & -- & -- & 5.61 & 40.2 \\
\addlinespace[2pt]
\tsub{\tind Discrete diffusion} \\
\tind\tind SEDD~\citep{lou2024discrete}
 & & & & & -- & -- & 5.65 & 125.7 \\
\tind\tind Duo~\citep{sahoo2025duo}
 & 1M & -- & 4.28 & 98.6 & 1M & 128 & 5.46 & 63.3 \\
\tind\tind MDLM~\citep{sahoo2024mdlm}
 & 1M & 1024 & 4.32 & 109.2 & 1M & 128 & 5.46 & 66.2 \\
\addlinespace[2pt]
\tsub{\tind Hybrid discrete--continuous} \\
\tind\tind CANDI \citep{pynadath2025candi}
 & -- & -- & 4.32 & 119.9 & 1M & 128 & 5.46& 71.0\\
 \tind\tind CADD \citep{continuously-augmented}
 & -- & -- & -- & -- & 1M & 128 & 5.48& 55.3\\
\addlinespace[2pt]
\tsub{\tind Continuous diffusion / flow} \\
\tind\tind LangFlow~\citep{chen2026langflow}
 & 1M & 128 & 4.31 & 92.2 & 1M & 128 & 5.43 & 60.1 \\
\tind\tind ELF-M (342M)~\citep{hu2026elf}
 & & & & & 95k & 64 & 5.44 & 62.5 \\
\tind\tind FLM~\citep{lee2026flm}
 & 1M & 1024 & 4.28 & 119.1 & 1M & 1024 & 5.40 & 96.8 \\
\tind\tind S-FLM~\citep{deschenaux2026languagemodelinghypersphericalflows}
 & & & & & 1M & 256 & 5.46 & 108.2 \\
\midrule
\tgrp{CoBit~\citep{cobit} vs.\ H-CoBit (ours)} \\
\tind CoBit-S
 & 1M & 128 & 4.34 & 85.2 & 750k & 256 & 5.46 & 83.2 \\
\tours\tind H-CoBit
 & 600k & 256 & 4.34 & 57.3 & 750k & 128 & 5.45 & 67.8 \\
\addlinespace[3pt]
\tind CoBit-S, $s_{\mathrm{noise}}{=}1.05$
 & 1M & 256& 4.34& 73.6 & 750k & 256 & 5.46 & 71.1 \\
\tours\tind H-CoBit $+$ PMS
 & 600k & 512 & 4.34 & \textbf{49.4} & 750k & 256 & 5.46 & \textbf{50.4} \\
\bottomrule
\end{tabular}
\endgroup

  \label{tab:main_table}
\end{table*}

 Table \ref{tab:gsm8k} displays H-CoBit's accuracy on GSM8K. On these 1319 mathematical reasoning problems, H-CoBit beats all other continuous DLMs, and narrows the gap to discrete DLMs.
\begin{table*}[t]\centering
\caption{\textbf{Mathematical reasoning accuracy.} GSM8K accuracy at matched training budget. CoBit/H-CoBit are reported at their best churn. H-CoBit ($\lambda_2=0.3$) beats CoBit by +2.1pts at half the NFE, +4pts at matched NFE. Baselines are from \citet{deschenaux2026languagemodelinghypersphericalflows, cobit}.}

\begingroup\small
\setlength{\tabcolsep}{6pt}
\begin{tabular}{@{}llrr@{}}
\toprule
\textbf{Model} & \textbf{Family} & \textbf{NFE} & \textbf{Acc.\ (\%)} \\
\midrule
\multicolumn{4}{@{}l}{\itshape Autoregressive (reference ceiling)} \\
AR (greedy) & autoregressive LM & -- & 63.3 \\
AR (sampling) & autoregressive LM & -- & 53.9 \\
\midrule
\multicolumn{4}{@{}l}{\itshape Discrete DLMs} \\
MDLM (low-T) & masked discrete diffusion & 1024 & $\sim$33\\
Duo (low-T) & discrete diffusion & 1024 & $\sim$36 \\
\midrule
\multicolumn{4}{@{}l}{\itshape Continuous DLMs} \\
FLM & continuous flow & 1024 & 0.3 \\
S-FLM (+top-$k{=}1$) & hyperspherical flow & 1024 & 18.0 \\
CoBit-S (no cluster bits) & bitstream diffusion & 512 & 23.4 \\
CoBit-S (no cluster bits) & bitstream diffusion & 1024 & 25.4 \\
H-CoBit ($n{=}16$) \textbf{(ours)} & bitstream diffusion & 512 & \textbf{27.4} \\
\bottomrule
\end{tabular}
\endgroup

  \label{tab:gsm8k}
  \vspace{-4pt}
\end{table*}
\subsection{Ablations}
\label{subsec:ablations}

In this section, we discuss ablations of our framework on H-CoBit. Due to limited compute, when retraining is required to conduct an ablation, we restrict ourselves to LM1B at 100k training steps.

\paragraph{Number of clusters and cluster loss weight.} Figure \ref{fig:k_lmabda_ablation}(b,c) studies the optimal number of clusters $n$ and cluster loss weight $\lambda_2$. Both panels (c) and (b) show that $n=16$ is optimal: it is the only $n$ to beat the baseline in all values of $\lambda_2$. Here, the multimodal process is beneficial at a very low number of clusters compared to vocabulary size ($|V|=30522$ for LM1B). We hypothesize that this is closely tied to the clustering. At $n=16$, the optimal cluster loss weight is $\lambda_2^{\text{opt}}=5\cdot10^{-2}$ (we use $\lambda_2=\frac{4}{15}$, implying LM1B performance could be further improved). This is likely highly dependent on $n$ and the dataset, thus requiring careful tuning for each new task.
\begin{figure*}[h]\centering
\input{figures/cluster_k_and_weight_ablation_body}
\vspace{-15pt}
\input{figures/cluster_k_and_weight_ablation_caption}
  \label{fig:k_lmabda_ablation}
  \vspace{-12pt}
\end{figure*}
\paragraph{Semantic vs. random cluster structure.} In order to verify that the semantic structure of our \emph{high level modality} is meaningful, we train and evaluate H-CoBit with \emph{random} cluster assignments. This means that a token's cluster is random, not dictated by a clustering of pretrained token embeddings.
Figure \ref{fig:k_lmabda_ablation}(a) compares the performance of the CoBit baseline to H-CoBit with random or with semantic clusters. It clearly shows that the random clusters model's frontier is intertwined with that of the baseline, while the semantic cluster model yields a right shift.
Hence, cluster semantics matter. It is insufficient to provide \emph{high level} categories; these categories should bear meaning. We also show in Appendix \ref{subsec:token_cluster agreement} that the cluster modality carries information about the final token beyond that of the token modality, further stressing the benefit of our joint multimodal diffusion framework.

\paragraph{Per-modality scheduling and sampling.} For H-CoBit, we tune two per-modality sampling mechanisms (see Section \ref{subsec:dual_noise}). \textbf{(1)} The \emph{per-modality} information uniform schedule. Table \ref{tab:dual_schedul} compares this with a single shared schedule, showing the per-modality schedule is slightly better in MAUVE and GenPPL at dataset entropy, but worse at lower entropies. 
Neither schedule thus clearly dominates the other.
\textbf{(2)} The stochastic churn level is scheduled and scaled differently for the cluster modality. We adopt two churning strategies: churning the cluster modality (via $\gamma_2$) at a different level, and churning it with a phase. Churning it only late, at the low half of noise, achieves -7.9/-17.4 GenPPL gains on the shared churn control on LM1B/OWT (see Figure \ref{fig:frontier}, Table \ref{tab:dual_churn}).

\begin{table*}[h]\centering
\caption{\textbf{Per-modality vs single schedule, H-CoBit.} LM1B $n{=}16$. MAUVE is the best reached.}

\begingroup\small
\setlength{\tabcolsep}{6pt}
\begin{tabular}{@{}lrrrrrrr@{}}
\toprule
 &  &  & \multicolumn{4}{c}{\textbf{GenPPL at token entropy $H$}} &  \\
\cmidrule(lr){4-7}
\textbf{Arm} & \textbf{$H{\approx}4.28$} & \textbf{$H{\approx}4.30$} & \textbf{$H{\approx}4.32$} & \textbf{$H{\approx}4.34$} & \textbf{MAUVE} \\
\midrule
dual (per-modality schedule) & 46.4 & 46.9 & \textbf{51.6} & \textbf{57.3} & \textbf{0.973} \\
single (shared schedule) & \textbf{44.6} & \textbf{45.1} & 52.8 & 57.4 & 0.965 \\
\bottomrule
\end{tabular}
\endgroup

  \label{tab:dual_schedul}
  \vspace{-8pt}
\end{table*}

\subsection{Cross paradigm transferability: H-FLM}
\label{subsec:h_flm}

In this section, we assess whether the benefits of H-CDLM transfer beyond diffusion by applying it to the Flow Language Model (FLM) \citep{lee2026flm}, a continuous flow matching model. We show that the resulting H-FLM also improves over FLM, confirming that our framework is not tied to a specific continuous generative paradigm.

\paragraph{H-FLM parametrization.} We parametrize H-FLM as suggested in Section \ref{sec:method}: the cluster modality is represented as a $n$-dimensional one-hot vector which is corrupted following its own error rate based schedule. All other training parameters are identical to \citet{lee2026flm}.

\paragraph{H-FLM results on LM1B.} Figure \ref{fig:flm_lm1B} compares the GenPPL entropy frontiers of FLM and H-FLM, using  stochastic sampling on LM1B at 1M training steps. The stochastic samplers used for FLM and H-FLM are defined after a careful sweep over churn levels. Stochastic sampling improves both FLM and H-FLM, a phenomenon that is not reported by FLM's authors \citep{lee2026flm}. H-FLM demonstrates a clear right shift with respect to FLM under stochastic sampling, and a 11 GenPPL points improvement at dataset entropy for NFE=512. We stress that the winning stochastic sampling parameters for H-FLM are similar to those of H-CoBit: late half window cluster churn at $\gamma_2=0.3$. This shows that: \textit{(i)} joint multimodal diffusion/flow matching via H-CDLM is beneficial over different CDLM architectures, \textit{(ii)} H-CDLMs seem to require stochastic sampling, and churning cluster latents late is beneficial. 

\begin{figure*}[h!]\centering
\begin{tikzpicture}
  \begin{groupplot}[
    group style={group size=3 by 1, horizontal sep=0.9cm, vertical sep=1.15cm},
  ]
    \nextgroupplot[
      paperaxis,
      width=3.6023307845579073cm,
      height=2.7cm,
      xmin=4,
      xmax=4.46,
      ymin=20,
      ymax=150,
      ylabel={GenPPL (GPT-2 large)},
      scaled x ticks=false,
      scaled y ticks=false,
      xtick={4.0,4.2,4.4},
      xticklabel style={/pgf/number format/.cd, fixed, zerofill, precision=1},
      ytick={50,100,150},
      label style={font=\scriptsize},
      tick label style={font=\tiny},
    ]
    \addplot+[color=cdata, densely dashed, line width=0.8pt, mark=none, forget plot]
      table[row sep=\\] {
        4.34 20\\
        4.34 200\\
      };
    \addplot+[color=ccobit, solid, line width=1.1pt, mark=square*, mark size=1.1pt, only marks, mark size=1.5pt, mark options={solid, fill=ccobit, draw=ccobit}, only marks, draw opacity=0.25, fill opacity=0.25, forget plot]
      table[row sep=\\] {
        3.69428 135.444\\
        1.3693 7.5343\\
      };
    \addplot+[color=ccobit, solid, line width=1.1pt, mark=square*, mark size=1.1pt, line width=1.2pt, mark size=1.5pt, mark options={solid, fill=ccobit, draw=ccobit}, forget plot]
      table[row sep=\\] {
        4.06475 55.7584\\
        4.12129 59.7728\\
        4.17388 65.1188\\
        4.23567 75.2647\\
        4.29847 91.4646\\
        4.36874 123.818\\
        4.41966 201.628\\
      };
    \addplot+[color=cchurnwin, solid, line width=1.3pt, mark=square*, mark size=1.1pt, only marks, mark size=1.5pt, mark options={solid, fill=cchurnwin, draw=cchurnwin}, only marks, draw opacity=0.25, fill opacity=0.25, forget plot]
      table[row sep=\\] {
        4.40154 184.868\\
        3.53595 110.639\\
        1.58428 10.9603\\
        0.526137 2.52346\\
      };
    \addplot+[color=cchurnwin, solid, line width=1.3pt, mark=square*, mark size=1.1pt, line width=1.2pt, mark size=1.5pt, mark options={solid, fill=cchurnwin, draw=cchurnwin}, forget plot]
      table[row sep=\\] {
        4.10879 43.545\\
        4.13078 45.3641\\
        4.15059 46.9716\\
        4.17765 51.122\\
        4.20902 55.8238\\
        4.25174 64.5574\\
        4.31006 83.4705\\
        4.34301 96.328\\
        4.38421 117.797\\
        4.4215 149.396\\
      };
    \node[anchor=north west, draw=black!25, line width=0.4pt, fill=white, fill opacity=0.9, text opacity=1, inner sep=2pt, font=\tiny] at (rel axis cs:0.025,0.975) {\begin{tabular}{@{}l@{~}l@{}}\tikz[baseline=-0.5ex]{\draw[ccobit, solid, line width=1.1pt] (0pt,0pt)--(13pt,0pt);} & FLM \\
\tikz[baseline=-0.5ex]{\draw[cchurnwin, solid, line width=1.3pt] (0pt,0pt)--(13pt,0pt);} & H-FLM \\
\tikz[baseline=-0.5ex]{\draw[cdata, densely dashed, line width=0.8pt] (0pt,0pt)--(13pt,0pt);} & data entropy\end{tabular}};
    \nextgroupplot[
      paperaxis,
      width=3.6023307845579073cm,
      height=2.7cm,
      xmin=4,
      xmax=4.46,
      ymin=20,
      ymax=150,
      scaled x ticks=false,
      scaled y ticks=false,
      xtick={4.0,4.2,4.4},
      xticklabel style={/pgf/number format/.cd, fixed, zerofill, precision=1},
      ytick={50,100,150},
      label style={font=\scriptsize},
      tick label style={font=\tiny},
    ]
    \addplot+[color=cdata, densely dashed, line width=0.8pt, mark=none, forget plot]
      table[row sep=\\] {
        4.34 20\\
        4.34 200\\
      };
    \addplot+[color=ccobit, solid, line width=1.1pt, mark=diamond*, mark size=1.1pt, only marks, mark size=1.5pt, mark options={solid, fill=ccobit, draw=ccobit}, only marks, draw opacity=0.25, fill opacity=0.25, forget plot]
      table[row sep=\\] {
        4.07377 167.566\\
        2.90849 48.3803\\
        1.30085 6.79468\\
      };
    \addplot+[color=ccobit, solid, line width=1.1pt, mark=diamond*, mark size=1.1pt, line width=1.2pt, mark size=1.5pt, mark options={solid, fill=ccobit, draw=ccobit}, forget plot]
      table[row sep=\\] {
        4.10529 49.4543\\
        4.1346 50.9749\\
        4.17002 54.7991\\
        4.20415 60.1357\\
        4.24814 68.5061\\
        4.30659 86.1933\\
        4.33385 99.8397\\
        4.3408 102.531\\
        4.35329 108.245\\
        4.38173 131.366\\
        4.40337 178.221\\
      };
    \addplot+[color=cchurnwin, solid, line width=1.3pt, mark=diamond*, mark size=1.1pt, only marks, mark size=1.5pt, mark options={solid, fill=cchurnwin, draw=cchurnwin}, only marks, draw opacity=0.25, fill opacity=0.25, forget plot]
      table[row sep=\\] {
        2.95369 36.7565\\
        0.458574 2.17921\\
        0.12894 1.42946\\
      };
    \addplot+[color=cchurnwin, solid, line width=1.3pt, mark=diamond*, mark size=1.1pt, line width=1.2pt, mark size=1.5pt, mark options={solid, fill=cchurnwin, draw=cchurnwin}, forget plot]
      table[row sep=\\] {
        4.0758 36.7025\\
        4.0919 37.6219\\
        4.12397 39.2745\\
        4.14807 41.1841\\
        4.17127 43.9764\\
        4.20838 49.2978\\
        4.25805 58.3362\\
        4.28319 64.1963\\
        4.31461 75.0321\\
        4.35157 91.0951\\
        4.36102 110.307\\
      };
    \nextgroupplot[
      paperaxis,
      width=3.6023307845579073cm,
      height=2.7cm,
      xmin=4,
      xmax=4.46,
      ymin=20,
      ymax=100,
      scaled x ticks=false,
      scaled y ticks=false,
      xtick={4.0,4.2,4.4},
      xticklabel style={/pgf/number format/.cd, fixed, zerofill, precision=1},
      ytick={25,50,100},
      label style={font=\scriptsize},
      tick label style={font=\tiny},
      after end axis/.append code={\node[anchor=north, font=\scriptsize] at (rel axis cs:-0.7498,-0.18518518518518517) {token unigram entropy $H$ (nats)};},
    ]
    \addplot+[color=cdata, densely dashed, line width=0.8pt, mark=none, forget plot]
      table[row sep=\\] {
        4.34 20\\
        4.34 200\\
      };
    \addplot+[color=ccobit, solid, line width=1.1pt, mark=pentagon*, mark size=1.1pt, only marks, mark size=1.5pt, mark options={solid, fill=ccobit, draw=ccobit}, only marks, draw opacity=0.25, fill opacity=0.25, forget plot]
      table[row sep=\\] {
        4.11464 154.026\\
        0.931182 4.24081\\
      };
    \addplot+[color=ccobit, solid, line width=1.1pt, mark=pentagon*, mark size=1.1pt, line width=1.2pt, mark size=1.5pt, mark options={solid, fill=ccobit, draw=ccobit}, forget plot]
      table[row sep=\\] {
        4.14101 47.5929\\
        4.15888 49.5864\\
        4.17116 50.8055\\
        4.19989 54.7269\\
        4.23209 61.661\\
        4.28381 74.4524\\
        4.35388 111.068\\
      };
    \addplot+[color=cchurnwin, solid, line width=1.3pt, mark=pentagon*, mark size=1.1pt, only marks, mark size=1.5pt, mark options={solid, fill=cchurnwin, draw=cchurnwin}, only marks, draw opacity=0.25, fill opacity=0.25, forget plot]
      table[row sep=\\] {
        4.32979 80.0268\\
        4.32534 82.8438\\
        4.30348 83.9225\\
        4.27322 85.2076\\
        2.33353 16.4212\\
        0.185267 1.54031\\
        0.117602 1.39988\\
      };
    \addplot+[color=cchurnwin, solid, line width=1.3pt, mark=pentagon*, mark size=1.1pt, line width=1.2pt, mark size=1.5pt, mark options={solid, fill=cchurnwin, draw=cchurnwin}, forget plot]
      table[row sep=\\] {
        4.02924 34.3482\\
        4.06965 34.5788\\
        4.09447 35.8563\\
        4.12907 37.7484\\
        4.15931 40.1425\\
        4.18999 44.0512\\
        4.23518 51.5596\\
        4.25527 56.0913\\
        4.28567 63.6388\\
        4.31605 75.8338\\
        4.33181 78.7755\\
      };
  \end{groupplot}
  \node[panellabel] at (group c1r1.north west) {(a) NFE 128};
  \node[panellabel] at (group c2r1.north west) {(b) NFE 512};
  \node[panellabel] at (group c3r1.north west) {(c) NFE 1024};
\end{tikzpicture}
\input{figures/flm_frontier_scale_caption}
  \label{fig:flm_lm1B}
\end{figure*}

\section{Conclusion}

We have proposed a new multimodal joint diffusion framework for text with minimal parameter overhead. To our knowledge, this is the first work to integrate hierarchical representations of tokens into continuous diffusion processes for text. It integrates innovations from the discrete hierarchical diffusion and the multimodal joint image diffusion literature. This multimodal setup yields consistent gains in generation quality and downstream task accuracy when applied to CoBit, beating recent DLMs in generative quality and continuous DLMs on a mathematical reasoning task. Similar performance gains are observed when applying our framework to FLM, highlighting its cross-model applicability. Our simple instantiation's large empirical gains point to a broader direction: 
generating text 
jointly across multiple representations of a sequence, spanning sub-word units to sentence-level semantics.

\section*{Acknowledgments}
Computational resources for this work were provided by CINECA's Leonardo Cluster, as part of a EuroHPC Development Call Access. We also thank the Archimedes Unit of the Athena Research Center for making this collaboration possible

\section*{AI Use Statement}
\label{sec:ai_use}

In this work, we used generative AI tools to create or edit software code, to edit figures and tables, to review grammar and formulation mistakes, and to polish text. We have not used generative AI tools for research directions, to formulate proofs, propose hypotheses, or interpret results, and the rest of the uses of AI required to disclose are not applicable for this work. We have reviewed all AI-assisted work. We take responsibility for the final content of this work, including text, claims or artifacts produced with the aid of generative AI.

\section*{Ethics statement}
Generative models carry a substantial risk of misuse. Their application can lead to various negative societal impacts, most notably the spread of disinformation. Enhancements in generative performance, as achieved by our method, may further increase the realism of generated content, potentially making disinformation even more convincing.

\section*{Reproducibility statement}
We provide implementation details, dataset descriptions and preprocessing, hyperparameters and evaluation metrics for reproducing H-CoBit results in Section\ref{sec:experiments} and Appendix \ref{app:exp_details}. All our implementations are based on open source code. Code and model checkpoints will be made publicly available upon acceptance.

\bibliography{literature}

@inproceedings{karras2022edm,
  title={Elucidating the Design Space of Diffusion-Based Generative Models},
  author={Karras, Tero and Aittala, Miika and Aila, Timo and Laine, Samuli},
  booktitle={Advances in Neural Information Processing Systems},
  year={2022}
}

@inproceedings{austin2021d3pm,
  title={Structured Denoising Diffusion Models in Discrete State-Spaces},
  author={Austin, Jacob and Johnson, Daniel D. and Ho, Jonathan and Tarlow, Daniel and van den Berg, Rianne},
  booktitle={Advances in Neural Information Processing Systems},
  year={2021}
}

@inproceedings{chen2022analogbits,
  title={Analog Bits: Generating Discrete Data using Diffusion Models with Self-Conditioning},
  author={Chen, Ting and Zhang, Ruixiang and Hinton, Geoffrey},
  booktitle={International Conference on Learning Representations},
  year={2023},
  url={https://openreview.net/forum?id=3itjR9QxFw}
}

@inproceedings{
sahoo2024mdlm,
title={Simple and Effective Masked Diffusion Language Models},
author={Subham Sekhar Sahoo and Marianne Arriola and Aaron Gokaslan and Edgar Mariano Marroquin and Alexander M Rush and Yair Schiff and Justin T Chiu and Volodymyr Kuleshov},
booktitle={The Thirty-eighth Annual Conference on Neural Information Processing Systems},
year={2024},
url={https://openreview.net/forum?id=L4uaAR4ArM}
}

@inproceedings{
  sahoo2025duo,
  title={The Diffusion Duality},
  author={Subham Sekhar Sahoo and Justin Deschenaux and Aaron Gokaslan and Guanghan Wang and Justin T Chiu and Volodymyr Kuleshov},
  booktitle={Forty-second International Conference on Machine Learning},
  year={2025},
  url={https://openreview.net/forum?id=9P9Y8FOSOk}
}

@inproceedings{
pynadath2025candi,
title={{CANDI}: Hybrid Discrete-Continuous Diffusion Models},
author={Patrick Pynadath and Jiaxin Shi and Ruqi Zhang},
booktitle={Forty-third International Conference on Machine Learning},
year={2026},
url={https://openreview.net/forum?id=03KQGUvw1e}
}

@misc{lee2026flm,
  title={Flow Map Language Models: One-step Language Modeling via Continuous Denoising},
  author={Lee, Chanhyuk and Yoo, Jaehoon and Agarwal, Manan and Shah, Sheel and Huang, Jerry and Raghunathan, Aditi and Hong, Seunghoon and Boffi, Nicholas M. and Kim, Jinwoo},
  year={2026},
  eprint={2602.16813},
  archivePrefix={arXiv},
  primaryClass={cs.CL},
  url={https://arxiv.org/abs/2602.16813}
}

@misc{chen2026langflow,
  title={LangFlow: Continuous Diffusion Rivals Discrete in Language Modeling},
  author={Chen, Yuxin and Liang, Chumeng and Sui, Hangke and Guo, Ruihan and Cheng, Chaoran and You, Jiaxuan and Liu, Ge},
  year={2026},
  eprint={2604.11748},
  archivePrefix={arXiv},
  primaryClass={cs.LG},
  url={https://arxiv.org/abs/2604.11748}
}

@misc{hu2026elf,
  title={ELF: Embedded Language Flows},
  author={Hu, Keya and Qiu, Linlu and Lu, Yiyang and Zhao, Hanhong and Li, Tianhong and Kim, Yoon and Andreas, Jacob and He, Kaiming},
  year={2026},
  eprint={2605.10938},
  archivePrefix={arXiv},
  primaryClass={cs.CL},
  url={https://arxiv.org/abs/2605.10938}
}

@article{raya2026infonoise,
  title   = {Information-Guided Noise Allocation for Efficient Diffusion Training},
  author  = {Raya, Gabriel and Nguyen, Bac and Batzolis, Georgios and Takida, Yuhta and Stancevic, Dejan and Murata, Naoki and Lai, Chieh-Hsin and Mitsufuji, Yuki and Ambrogioni, Luca},
  eprint={2602.18647},
  archivePrefix={arXiv},
primaryClass={cs.CL},
  year    = {2026},
  url     = {https://arxiv.org/abs/2602.18647}
}

@inproceedings{stancevic2025entropic,
  title     = {Entropic Time Schedulers for Generative Diffusion Models},
  author    = {Stancevic, Dejan and Handke, Florian and Ambrogioni, Luca},
  booktitle = {Advances in Neural Information Processing Systems},
  year      = {2025},
  url       = {https://openreview.net/forum?id=EfDIApcjgI}
}

@inproceedings{lou2024discrete,
  title     = {Discrete Diffusion Modeling by Estimating the Ratios of the Data Distribution},
  author    = {Lou, Aaron and Meng, Chenlin and Ermon, Stefano},
  booktitle = {Proceedings of the 41st International Conference on Machine Learning},
  series    = {Proceedings of Machine Learning Research},
  volume    = {235},
  pages     = {32819--32848},
  year      = {2024},
  publisher = {PMLR}
}

@inproceedings{
latentforcing,
title={Latent Forcing: Reordering the Diffusion Trajectory for Pixel-Space Image Generation},
author={Alan Baade and Eric Ryan Chan and Kyle Sargent and Changan Chen and Justin Johnson and Ehsan Adeli and Li Fei-Fei},
booktitle={Forty-third International Conference on Machine Learning},
year={2026},
url={https://openreview.net/forum?id=LFUaHUCO1a}
}

@misc{cobit,
      title={CoBit: Language Modeling with Bitstream Diffusion}, 
      author={Georgios Batzolis and Mark Girolami and Luca Ambrogioni},
      year={2026},
      eprint={2605.07013},
      archivePrefix={arXiv},
      primaryClass={cs.CL},
      url={https://arxiv.org/abs/2605.07013}, 
}

@misc{shabalin2026gaussiandiffusionmodelsfail,
      title={Why Gaussian Diffusion Models Fail on Discrete Data and How to Prevent It?}, 
      author={Alexander Shabalin and Simon Elistratov and Viacheslav Meshchaninov and Ildus Sadrtdinov and Dmitry Vetrov},
      year={2026},
      eprint={2604.02028},
      archivePrefix={arXiv},
      primaryClass={cs.CL},
      url={https://arxiv.org/abs/2604.02028}, 
}

@misc{deschenaux2026languagemodelinghypersphericalflows,
      title={Language Modeling with Hyperspherical Flows}, 
      author={Justin Deschenaux and Caglar Gulcehre},
      year={2026},
      eprint={2605.11125},
      archivePrefix={arXiv},
      primaryClass={cs.LG},
      url={https://arxiv.org/abs/2605.11125}, 
}

@inproceedings{
kouzelis2025boosting,
title={Boosting Generative Image Modeling via Joint Image-Feature Synthesis},
author={Theodoros Kouzelis and Efstathios Karypidis and Ioannis Kakogeorgiou and Spyros Gidaris and Nikos Komodakis},
booktitle={The Thirty-ninth Annual Conference on Neural Information Processing Systems},
year={2025},
url={https://openreview.net/forum?id=i4qAfV04rZ}
}

@inproceedings{
zhou2025next,
title={Next Semantic Scale Prediction via Hierarchical Diffusion Language Models},
author={Cai Zhou and Chenyu Wang and Dinghuai Zhang and Shangyuan Tong and Yifei Wang and Stephen Bates and Tommi Jaakkola},
booktitle={The Thirty-ninth Annual Conference on Neural Information Processing Systems},
year={2025},
url={https://openreview.net/forum?id=kUzBGEuu7w}
}

@misc{wu2026rethinkingtokenpredictiontreestructured,
      title={Rethinking Token Prediction: Tree-Structured Diffusion Language Model}, 
      author={Zihao Wu and Haoming Yang and Juncheng Dong and Vahid Tarokh},
      year={2026},
      eprint={2604.03537},
      archivePrefix={arXiv},
      primaryClass={cs.CL},
      url={https://arxiv.org/abs/2604.03537}, 
}

@inproceedings{
franca2026hackinggenerativeperplexityunconditional,
title={Hacking Generative Perplexity: Why Unconditional Text Evaluation Needs Distributional Metrics},
author={Antonio Franca and Alexander Tong},
booktitle={ICML 2026 Workshop on Structured Probabilistic Inference {\&} Generative Modeling},
year={2026},
url={https://openreview.net/forum?id=f1gwcTDwh9}
}

@misc{khelifa2026diffusionmodelsobservegradients,
      title={Diffusion Models Observe Only Gradients: A Geometric Perspective on Score Matching Errors}, 
      author={Naïl B. Khelifa and Richard E. Turner and Ramji Venkataramanan},
      year={2026},
      eprint={2606.06179},
      archivePrefix={arXiv},
      primaryClass={stat.ML},
      url={https://arxiv.org/abs/2606.06179}, 
}

@article{
oquab2024dinov,
title={{DINO}v2: Learning Robust Visual Features without Supervision},
author={Maxime Oquab and Timoth{\'e}e Darcet and Th{\'e}o Moutakanni and Huy V. Vo and Marc Szafraniec and Vasil Khalidov and Pierre Fernandez and Daniel Haziza and Francisco Massa and Alaaeldin El-Nouby and Mido Assran and Nicolas Ballas and Wojciech Galuba and Russell Howes and Po-Yao Huang and Shang-Wen Li and Ishan Misra and Michael Rabbat and Vasu Sharma and Gabriel Synnaeve and Hu Xu and Herve Jegou and Julien Mairal and Patrick Labatut and Armand Joulin and Piotr Bojanowski},
journal={Transactions on Machine Learning Research},
issn={2835-8856},
year={2024},
url={https://openreview.net/forum?id=a68SUt6zFt},
note={Featured Certification}
}

@misc{owt,  
	title={OpenWebText Corpus},
	author={Aaron Gokaslan and Vanya Cohen},
	howpublished={\url{http://Skylion007.github.io/OpenWebTextCorpus}}, 
	year={2019}
}

@inproceedings{lm1b,
  title={One billion word benchmark for measuring progress in statistical language modeling},
  author={Ciprian Chelba and Tomas Mikolov and Mike Schuster and Qi Ge and Thorsten Brants and Phillipp Koehn and Tony Robinson},
  booktitle={Interspeech},
  year={2013},
  url={https://api.semanticscholar.org/CorpusID:14136307}
}

@inproceedings{
liu2023tinygsm,
title={Tiny{GSM}: achieving 80\% on {GSM}8k with one billion parameters},
author={Bingbin Liu and Sebastien Bubeck and Ronen Eldan and Janardhan Kulkarni and Yuanzhi Li and Anh Nguyen and Rachel Ward and Yi Zhang},
booktitle={The 3rd Workshop on Mathematical Reasoning and AI at NeurIPS'23},
year={2023},
url={https://openreview.net/forum?id=ROOVUBZp8v}
}

@misc{gsm8k,
      title={Training Verifiers to Solve Math Word Problems}, 
      author={Karl Cobbe and Vineet Kosaraju and Mohammad Bavarian and Mark Chen and Heewoo Jun and Lukasz Kaiser and Matthias Plappert and Jerry Tworek and Jacob Hilton and Reiichiro Nakano and Christopher Hesse and John Schulman},
      year={2021},
      eprint={2110.14168},
      archivePrefix={arXiv},
      primaryClass={cs.LG},
      url={https://arxiv.org/abs/2110.14168}, 
}

@misc{touvron2023llama2openfoundation,
      title={Llama 2: Open Foundation and Fine-Tuned Chat Models}, 
      author={Hugo Touvron and Louis Martin and Kevin Stone and Peter Albert and Amjad Almahairi and Yasmine Babaei and Nikolay Bashlykov and Soumya Batra and Prajjwal Bhargava and Shruti Bhosale and Dan Bikel and Lukas Blecher and Cristian Canton Ferrer and Moya Chen and Guillem Cucurull and David Esiobu and Jude Fernandes and Jeremy Fu and Wenyin Fu and Brian Fuller and Cynthia Gao and Vedanuj Goswami and Naman Goyal and Anthony Hartshorn and Saghar Hosseini and Rui Hou and Hakan Inan and Marcin Kardas and Viktor Kerkez and Madian Khabsa and Isabel Kloumann and Artem Korenev and Punit Singh Koura and Marie-Anne Lachaux and Thibaut Lavril and Jenya Lee and Diana Liskovich and Yinghai Lu and Yuning Mao and Xavier Martinet and Todor Mihaylov and Pushkar Mishra and Igor Molybog and Yixin Nie and Andrew Poulton and Jeremy Reizenstein and Rashi Rungta and Kalyan Saladi and Alan Schelten and Ruan Silva and Eric Michael Smith and Ranjan Subramanian and Xiaoqing Ellen Tan and Binh Tang and Ross Taylor and Adina Williams and Jian Xiang Kuan and Puxin Xu and Zheng Yan and Iliyan Zarov and Yuchen Zhang and Angela Fan and Melanie Kambadur and Sharan Narang and Aurelien Rodriguez and Robert Stojnic and Sergey Edunov and Thomas Scialom},
      year={2023},
      eprint={2307.09288},
      archivePrefix={arXiv},
      primaryClass={cs.CL},
      url={https://arxiv.org/abs/2307.09288}, 
}

@inproceedings{onefitsall,
author = {Bao, Fan and Nie, Shen and Xue, Kaiwen and Li, Chongxuan and Pu, Shi and Wang, Yaole and Yue, Gang and Cao, Yue and Su, Hang and Zhu, Jun},
title = {One transformer fits all distributions in multi-modal diffusion at scale},
year = {2023},
publisher = {JMLR.org},
booktitle = {Proceedings of the 40th International Conference on Machine Learning},
articleno = {72},
numpages = {26},
location = {Honolulu, Hawaii, USA},
series = {ICML'23}
}

@misc{pynadath2026generativefrontiersevaluationmatters,
      title={Generative Frontiers: Why Evaluation Matters for Diffusion Language Models}, 
      author={Patrick Pynadath and Jiaxin Shi and Ruqi Zhang},
      year={2026},
      eprint={2604.02718},
      archivePrefix={arXiv},
      primaryClass={cs.LG},
      url={https://arxiv.org/abs/2604.02718}, 
}

@inproceedings{continuously-augmented,
title = {Continuously Augmented Discrete Diffusion model for Categorical Generative Modeling},
booktitle = {The Fourteenth International Conference on Learning Representations},
author = {Huangjie Zheng and Shansan Gong and Ruixiang Zhang and Tianrong Chen and Jiatao Gu and Mingyuan Zhou and Navdeep Jaitly and Yizhe Zhang},
year = {2026},
URL = {https://openreview.net/forum?id=JNAZ3e7Bwt}
}

@misc{zhou2026coevolutionarycontinuousdiscretediffusion,
      title={Coevolutionary Continuous Discrete Diffusion: Make Your Diffusion Language Model a Latent Reasoner}, 
      author={Cai Zhou and Chenxiao Yang and Yi Hu and Chenyu Wang and Chubin Zhang and Muhan Zhang and Lester Mackey and Tommi Jaakkola and Stephen Bates and Dinghuai Zhang},
      year={2026},
      eprint={2510.03206},
      archivePrefix={arXiv},
      primaryClass={cs.AI},
      url={https://arxiv.org/abs/2510.03206}, 
}
\bibliographystyle{iclr2027_conference}

\newpage

\appendix

\section*{Appendix}

\section{Semantic Clustering of Token Embeddings}
\label{app:clustering}

This appendix details the construction of the surjective map $\Gamma : \mathcal{V} \to \mathcal{C}$
used in Section \ref{subsec:hiearchical_construc}. $\Gamma$ is a spherical $K$-means partition of a
pretrained token-embedding matrix, computed offline and frozen before training, followed by a
frequency-driven relabelling of the cluster \emph{codes} (Algorithm \ref{alg:clustering}).

\subsection{Requirements on \texorpdfstring{$\Gamma$}{Gamma}}

\para{Determinism.} Proposition \ref{prop:deterministic_latent} requires $y^2 = \Gamma(y^1)$ to be a
deterministic function of the tokens, for our model of the joint distribution to be a valid model of the token distribution. $\Gamma$ is therefore computed once, offline, and frozen: it is a
property of the tokenizer and of the pretrained embeddings, not a learned component. It is independent of the underlying model's token representation. This is what allows discarding the cluster modality at the end of sampling.

\para{Balance.} The per-position entropy of the high-level modality (clusters) is $H(y^2)$, maximal and equal to
$\log n$ for equiprobable clusters. Balanced clusters are more expressive, offering a better anchor to the diffusion process.

\para{Semantics.} Balance alone is not sufficient: Section \ref{subsec:ablations} shows that a
uniformly random cluster map, balanced by construction, recovers none of the gain. What the model
exploits is that tokens sharing a cluster are close in embedding space, so resolving the cluster genuinely narrows the token posterior and provides a strong anchor to the denoising process.

\subsection{Input embeddings}
\label{app:clust:embeddings}

We cluster the input embedding matrix $E \in \mathbb{R}^{|\mathcal{V}| \times D}$ of a pretrained
LangFlow checkpoint \citep{chen2026langflow}, $D = 768$ (\texttt{langflow-lm1b} for LM1B,
\texttt{langflow-owt} for OpenWebText): a flow LM using the same tokenizers. LangFlow normalizes each embedding row to the sphere of radius
$\sqrt{D}$, so that all information is carried by directions. We accordingly unit-normalize
the rows of $E$ and cluster on the unit sphere.\footnote{We read the raw embedding parameter, which
LangFlow row-normalizes at inference time.}

\para{TinyGSM.} On TinyGSM, following \cite{deschenaux2026languagemodelinghypersphericalflows, cobit}, we use the SmolLM-135M tokenizer plus an added \texttt{[PAD]}
($|\mathcal{V}| = 49{,}153$). No LangFlow checkpoint shares that tokenizer. We use the input embeddings of an S-FLM \citep{deschenaux2026languagemodelinghypersphericalflows}
checkpoint, which also constrains embeddings to a sphere.

\para{H-CoBit on OpenWebText: span averaging.} On OpenWebText, CoBit does not operate directly on
GPT-2 tokens but on \texttt{gpt2id\_bpe16} \emph{code} tokens, a second-stage BPE tokenizer over GPT-2 ids, yielding
$|\mathcal{V}| = 65{,}536$ -- which no pretrained model embeds. We therefore take each code token's
vector to be the mean of the LangFlow-OWT embeddings of the GPT-2 span it expands to. Four code tokens (\texttt{[PAD]}, \texttt{[UNK]}, \texttt{[EOSEQ]}, \texttt{'\textbackslash n'}) expand to no
span and receive no vector. The same happens for SmolLM-135M's \texttt{[PAD]} token (for TinyGSM). They are attributed to a separate cluster, of id
$n-1$, and $K$-means is run with $n-1$ centroids on the remaining tokens.

\subsection{Clustering}
\label{app:clust:kmeans}

We run Lloyd's algorithm with $K$-means++ initialization (\texttt{sklearn.cluster.KMeans}) on the
unit-normalized token embeddings. We keep the best of $n_{\mathrm{init}} = 10$ restarts, each capped at
$T = 300$ iterations and stopped early below a relative centroid shift of $\varepsilon = 10^{-4}$.

\subsection{Frequency-driven code rebalancing for H-CoBit}
\label{app:clust:rebalance}

$K$-means numbers its clusters arbitrarily and that number is written directly into the cluster bits, implying potentially imbalanced per-bit marginals
$p_b = \mathbb{P}(\text{bit } b = 1)$ for H-CoBit. We therefore search for a permutation $\tau$ of the cluster codes
maximizing the frequency-weighted $\sum_b H(p_b)$, by 2-opt local search over code pairs. Frequencies
are estimated over a prefix of the on-disk token cache ($50$M / $8$M tokens for OWT / TinyGSM). For OWT $n = 16$, the realized per-bit
entropy goes from $3.75$ to $3.99$ of $4.0$ ($35$ swaps). We only apply code rebalancing on OWT and TinyGSM.

\begin{algorithm}[h]
\caption{Construction of $\Gamma$ (offline, once per (tokenizer, $n$) pair)}
\label{alg:clustering}
\begin{algorithmic}[1]
\Require pretrained embeddings $E$, clustered vocabulary $\mathcal{V}$, clusters $n$, corpus cache
\Ensure frozen map $\Gamma : \mathcal{V} \to \mathcal{C}$, $|\mathcal{C}| = n$
\State $e_v \gets E_v$, or the mean of $E$ over $v$'s subword span for code-token vocabularies (H-CoBit on OWT)
\State $\mathcal{F} \gets \{ v : e_v \text{ exists} \}$; \quad
  $\hat{e}_v \gets e_v / \lVert e_v \rVert$ for $v \in \mathcal{F}$
\If{reserving a structural cluster} \Comment{OWT, TinyGSM}
\State $\Gamma(v) \gets n-1$ for all $v \notin \mathcal{F}$; \quad $K \gets n-1$
\Else \State $K \gets n$ \Comment{LM1B: $\mathcal{F} = \mathcal{V}$}
\EndIf
\State $\Gamma|_{\mathcal{F}} \gets \textsc{KMeans}\big(\{\hat{e}_v\}_{v \in \mathcal{F}},\, K;\;\text{K-means++},\big) $
  
\State $f_c \gets$ corpus frequency of cluster $c$, over a prefix of the token cache
\State $\tau \gets \arg\max_{\tau \in \mathfrak{S}_n} \sum_{b} H\big(p_b(\tau; f)\big)$
  by 2-opt local search \Comment{codes only}
\State \Return $\tau \circ \Gamma$
\end{algorithmic}
\end{algorithm}

\subsection{Hyperparameters and measured quality}

Table \ref{tab:clustering_hparams} lists the settings used for all runs. The clustering is cheap: a few minutes per map on CPU.

\begin{table}[h]
\centering
\small
\caption{\textbf{Clustering hyperparameters.} $\Gamma$ is computed once per (tokenizer, $n$) pair and
frozen before training.}
\label{tab:clustering_hparams}
\begin{tabular}{@{}ll@{}}
\toprule
Parameter & Value \\
\midrule
Embeddings & pretrained LangFlow/S-FLM input embeddings, $D = 768$ \\
Metric & Euclidean on unit-normalized rows ($\equiv$ cosine) \\
Clusters $n$ & LM1B $\{8, \ldots, 256\}$, OWT / TinyGSM $\{16, 32, 64\}$; $n = 16$ default \\
Initialization & $K$-means++, $n_{\mathrm{init}} = 10$ restarts, best inertia \\
Max iterations $T$ / tolerance $\varepsilon$ & $300$ / $10^{-4}$ (relative centroid shift) \\
Seed & $42$ \\
Structural cluster & reserved on OWT, TinyGSM (one of the $n$ ids); not LM1B \\
\bottomrule
\end{tabular}
\end{table}

\begin{table}[h]
\centering
\small
\caption{\textbf{Measured clustering quality at $n = 16$.} $n_{\mathrm{sem}}$ excludes the reserved
structural cluster. Silhouette / DBI / CH are computed on the fitted
tokens only, silhouette on a $10{,}000$-token subsample. $H_{\mathrm{type}}$ is the type-occupancy
entropy normalized by $\log_2 n_{\mathrm{sem}}$.}
\label{tab:clustering_quality}
\begin{tabular}{@{}lccccccc@{}}
\toprule
Corpus & $|\mathcal{V}|$ & $n_{\mathrm{sem}}$ & size min / mean / max & Sil.\ $\uparrow$ & DBI $\downarrow$ & CH $\uparrow$ & $H_{\mathrm{type}}$ \\
\midrule
LM1B        & 30{,}522 & 16 & 848 / 1908 / 4210  & 0.050 & 3.50 & 506  & 0.969 \\
OpenWebText & 65{,}536 & 15 & 2279 / 4369 / 7671 & 0.057 & 3.41 & 1300 & 0.978 \\
\bottomrule
\end{tabular}
\end{table}

Silhouette values are small in absolute terms ($0.003$--$0.075$ across our maps), as expected for
high-dimensional embedding clustering at these $n$. We read them comparatively between embedding
sources at fixed $n$, not as evidence of well-separated clusters. Qualitatively the partitions are
interpretable. On LM1B, $n = 16$ separates punctuation, subword continuations, function words and
concrete nouns (\emph{head, eyes, house, hand, room, water, door, river}). Its largest cluster is the
block of BERT's reserved \texttt{[unused$k$]} ids, which never occur in the corpus.
Table \ref{tab:owt_clusters} gives the OWT case.

\begin{table}[h]
\centering
\small
\caption{\textbf{OpenWebText clusters at $n = 16$.} The seven most-used semantic clusters, with their
share of a $35.5$M-token sample and their most frequent \texttt{gpt2id\_bpe16} code tokens (which may
span several GPT-2 tokens). Ids are the $K$-means labels, prior to the code rebalancing of
Section \ref{app:clust:rebalance}, which permutes them without changing membership.}
\label{tab:owt_clusters}
\begin{tabular}{@{}clp{7.4cm}@{}}
\toprule
Id & Usage & Most frequent members \\
\midrule
11 & 16.5\% & \texttt{the}, \texttt{of}, \texttt{in}, \texttt{a}, \texttt{for}, \texttt{of the}, \texttt{in the}, \texttt{with}, \texttt{by}, \texttt{on} \\
12 & 13.0\% & \texttt{to}, \texttt{that}, \texttt{is}, \texttt{'s}, \texttt{are}, \texttt{as}, \texttt{it}, \texttt{were}, \texttt{has}, \texttt{have} \\
9  &  7.9\% & \texttt{,}, \texttt{and}, \texttt{, the}, \texttt{, and}, \texttt{or}, \texttt{, a}, \texttt{, but}, \texttt{--}, \texttt{;} \\
0  &  6.6\% & \texttt{.}, \texttt{:}, \texttt{. The}, \texttt{.\textbackslash n\textbackslash nThe}, \texttt{.''}, \texttt{. He}, \texttt{?} \\
2  &  6.2\% & \texttt{was}, \texttt{had}, \texttt{said}, \texttt{ed}, \texttt{has been}, \texttt{made}, \texttt{did}, \texttt{left}, \texttt{says}, \texttt{called} \\
4  &  4.7\% & \texttt{two}, \texttt{2}, \texttt{1}, \texttt{3}, \texttt{4}, \texttt{three}, \texttt{5}, \texttt{\$}, \texttt{10} \\
3  &  4.2\% & \texttt{National}, \texttt{State}, \texttt{House}, \texttt{New}, \texttt{News}, \texttt{City}, \texttt{Google}, \texttt{New York}, \texttt{University} \\
\bottomrule
\end{tabular}
\end{table}

\subsection{The random-cluster control}

In Section \ref{subsec:ablations} we present an ablation replacing $\Gamma$ by a map
$\Gamma_{\mathrm{rand}}$ drawn uniformly at random over $\mathcal{C}^{\mathcal{V}}$, independently of the
embeddings, then frozen exactly like $\Gamma$. It is matched to our clustering in every non-semantic
respect. The only thing it destroys is the alignment between cluster
identity and embedding geometry.

\section{Extended Background}
\label{app:extended_background}

In this section, we provide further details about the general background set up for continuous diffusion language models in Section \ref{sec:background}, and provide the link between our general setup and existing Continuous DLM's instantiations from the literature.

\subsection{Auto-Regressive \emph{vs} Diffusion Language Models}

Auto-Regressive models conditionally factorize the joint data distribution left-to-right: $p_{\theta}(y):=\prod_{l=1}^{L}p_{\theta}(y^l|y^{< l})$. 
This allows computing the exact likelihood of $p_{\theta}$ and mimics human speech, at the cost of slow iterative sampling. Diffusion models instead generate all positions in parallel through iterative refinement, with the joint distribution defined by the probability-flow Ordinary Differential Equation (ODE) (\eqref{equ:pfode}), allowing for fast parallel sampling. Here, sampling costs are governed by the sampling steps rather than only the sequence length.

\subsection{Link to score matching}

As presented in Section \ref{sec:background}, we consider a denoiser $D_{\theta}(x_{\sigma}, \sigma) \approx \mathbb{E}[x_0 | x_{\sigma}] \in E^{L \times d}$. By Tweedie's formula, it gives direct access to the score of the marginal density $p_\sigma$,
\begin{align}
\label{equ:score}
    \nabla_x \log p_{\sigma}(x)
    = \frac{D_{\theta}(x, \sigma) - x}{\sigma^{2}}.
\end{align}

\subsection{From general \texorpdfstring{$(\alpha_t, \sigma_t)$}{(alpha,sigma)} parametrizations to the variance-exploding convention}
\label{app:ve_correspondence}

Section \ref{sec:background} adopts the variance-exploding (VE) convention of \eqref{equ:forward}, also used by \cite{cobit}. Flow matching and variance-preserving formulations instead use an affine Gaussian path
\begin{align}
\label{equ:affine}
    x_t = \alpha_t x_0 + \sigma_t \epsilon,
    \qquad \epsilon \sim \mathcal{N}(0, I_{Ld}),
    \qquad \alpha_t > 0,
\end{align}
with $\alpha_t$ and $\sigma_t$ being monotone along the path. Setting
\begin{align}
\label{equ:ve_map}
    \sigma \;=\; \frac{\sigma_t}{\alpha_t},
    \qquad
    x_{\sigma} \;=\; \frac{x_t}{\alpha_t}
    \qquad \text{gives} \qquad
    x_{\sigma} = x_0 + \sigma \epsilon,
\end{align}
which is \eqref{equ:forward}.

At fixed $t$ the map $x_t \mapsto x_{\sigma}$ is a bijection, so conditioning on either variable gives the same posterior, $\mathbb{E}[x_0 \mid x_t] = \mathbb{E}[x_0 \mid x_{\sigma}] = D_{\theta}(x_{\sigma}, \sigma)$: any network trained to predict clean data under \eqref{equ:affine} is a denoiser, up to input rescaling. 

Furthermore, $\sigma$ is a monotone function of $t$. A monotone change of time variable leaves the continuous ODE trajectory unchanged. It affects only the density of training noise levels which is what the schedules of Section \ref{subsec:sampling} control. 

\subsection{From our general setup to the continuous DLMs literature}
\label{app:model_instantiation}

Tables \ref{tab:app_models_repr} and \ref{tab:app_models_train} instantiate, respectively, the representation and corruption process of Section \ref{sec:background}, and its objective, schedule and sampler, for several existing continuous DLMs.

\begin{table}[h]
\centering
\footnotesize
\setlength{\tabcolsep}{2.5pt}
\renewcommand{\arraystretch}{1.15}
\caption{\textbf{Representation and corruption process.} Instantiation of the encoder $\mathcal{E}$, of the forward process, and of the decoder $\mathcal{D}$ of Section \ref{sec:background}. The VE-equivalent noise level follows from the rescaling \eqref{equ:ve_map}.}
\label{tab:app_models_repr}
\begin{tabular}{@{}>{\raggedright\arraybackslash}p{1.5cm} >{\raggedright\arraybackslash}p{3.0cm} >{\raggedright\arraybackslash}p{3.3cm} >{\raggedright\arraybackslash}p{2.0cm} >{\raggedright\arraybackslash}p{2.7cm}@{}}
\toprule
Model & Representation $x_0 = \mathcal{E}(y)$ & Corruption path & VE-equivalent $\sigma$ & Decoder $\mathcal{D}$ \\
\midrule
Analog Bits
& $m = \lceil \log_2 |\mathcal{V}| \rceil$ analog bits in $\{\pm 1\}$; $d = m$
& VP (cosine), $x_t = \sqrt{\bar\alpha_t}\, x_0 + \sqrt{1-\bar\alpha_t}\, \epsilon$
& $\sqrt{(1-\bar\alpha_t)/\bar\alpha_t}$
& threshold bits at $0$ \\
CoBit
& analog bits in $\{0,1\}$, $m=15$ (LM1B) or $16$ (OWT codec); $d=m$, patched to length $L$
& VE, native: $x_\sigma = x_0 + \sigma\epsilon$, $\sigma_{\mathrm{data}} = 1/2$
& $\sigma$ (native)
& threshold bit probabilities, invert codec \\
LangFlow
& learned embeddings, $\ell_2$-normalized and scaled by $\sqrt{D}$; $d = D = 768$
& VP $\gamma$-path, $z_l = \alpha_l z + \sigma_l \epsilon$, $\sigma_l^2 = \mathrm{sigmoid}(l)$
& $e^{l/2}$, i.e. $l = 2 \log \sigma$
& $\arg\max$ of predicted token probabilities \\
FLM,\newline FMLM
& one-hot vectors; $d = |\mathcal{V}|$
& linear interpolant $I_t = (1-t)\,\epsilon + t\, x_1$
& $(1-t)/t$
& $\arg\max$ per position \\
ELF
& frozen T5-small \emph{contextual} embeddings ($512$), linear bottleneck to $128$
& rectified flow, $z_t = t\, x + (1-t)\, \epsilon$
& $(1-t)/t$
& network in \emph{decode} mode at $t=1$, then unembedding \\
S-FLM
& learned unit-norm embeddings, $E = \mathbb{S}$; $d = 768$
& geodesic, $z_t = \mathrm{SLERP}(z_0, z_1, \alpha_t)$, $z_0 \sim \mathcal{U}(\mathbb{S}^{d-1})$
& --- (not additive Gaussian)
& $\arg\max$ of $p^{\theta}_{1|t}$ \\
\bottomrule
\end{tabular}
\end{table}

\begin{table}[h]
\centering
\footnotesize
\setlength{\tabcolsep}{2.5pt}
\renewcommand{\arraystretch}{1.15}
\caption{\textbf{Prediction target, objective, schedule and sampler of continuous DLMs.} All models estimate the same posterior, but differ in how it is parametrized, how noise levels are allocated, and how much stochasticity the sampler injects.}
\label{tab:app_models_train}
\begin{tabular}{@{}>{\raggedright\arraybackslash}p{1.3cm} >{\raggedright\arraybackslash}p{2.9cm} >{\raggedright\arraybackslash}p{2.2cm} >{\raggedright\arraybackslash}p{3.0cm} >{\raggedright\arraybackslash}p{3.2cm}@{}}
\toprule
Model & Network output & Objective & Noise / time schedule & Sampler, stochasticity \\
\midrule
Analog Bits 
\citeyearpar{chen2022analogbits}
& $\hat{x}_0$ (analog bits)
& MSE, $w \equiv 1$
& cosine (VP)
& DDIM / ancestral; SC ($p=0.5$); asymmetric time intervals \\
CoBit \citeyearpar{cobit}
& bit posteriors in $(0,1)^{Lm}$ via matched-filter residual $\ell_\theta = r_\theta + \mathrm{clip}\big((x_\sigma - \tfrac12)/\sigma^2\big)$
& weighted MSE, EDM $w(\sigma) = \frac{\sigma^2 + \sigma_{\mathrm{data}}^2}{\sigma^2 \sigma_{\mathrm{data}}^2}$
& online entropy-rate, $\pi_\alpha(u) \propto g(\sigma)\, \hat{h}_{\log}(\sigma)^{\alpha}$ with $\alpha = \tfrac12$ and $\hat{h}_{\log} \approx e(\sigma)/\sigma^2$
& DDIM PF-ODE $+$ full-band EDM churn $\gamma_i = S_{\mathrm{churn}}/N$, entropy-gated $\lambda_{\mathrm{ent}} \approx S_{\mathrm{churn}} \pi_\alpha$; SC ($p=0.5$, carry mode) \\
LangFlow \citeyearpar{chen2026langflow}
& token probabilities $\hat{x}_\theta \in \Delta^{|\mathcal{V}|-1}$; denoiser $\hat{z}_\theta = E^\top \hat{x}_\theta$
& cross-entropy (Bregman flow matching) $+$ scheduler loss
& learnable Gumbel density over $l$ fitted to the information gain $H'_\gamma\approx |\mathrm{d}\hat{\mathcal{L}}/\mathrm{d}t|$
& Euler on the $l$-path, deterministic by design; SC ($p=0.25$) \\
FLM,\newline FMLM \citeyearpar{lee2026flm}
& posterior $p_{1|t}$ through a tokenwise softmax ($= \mathbb{E}[x_1 \mid I_t]$)
& cross-entropy; FMLM: KL semigroup loss on the two-time denoiser $\delta_{s,t}$
& decoding-error reparametrization $\tau(t) = 1 - \tfrac{|\mathcal{V}|}{|\mathcal{V}|-1} P_e(t)$
& Euler ODE; FMLM performs one/few-step flow-map jumps; no SC. (We add stochastic sampling). \\
ELF \citeyearpar{hu2026elf}
& clean embeddings ($x$-prediction), shared-weight decoding head at $t=1$
& velocity MSE ($w \propto (1-t)^{-2}$, $80\%$) $+$ token cross-entropy at $t=1$ ($20\%$)
& logit-normal over $t$ ($P_{\mathrm{mean}} = -1.5$, $P_{\mathrm{std}} = 0.8$), at training and inference
& Euler ODE, or noise re-injection (``SDE'', scale $\gamma$); SC ($p=0.5$) $+$ training-time CFG \\
S-FLM \citeyearpar{lee2026flm}
& posterior $p^{\theta}_{1|t}$; tangent average $\bar{u} = \sum_v p^{\theta}_{1|t}(v)\, \log_{z_t}(\hat{e}_v)$
& cross-entropy
& truncation to the high-noise band $\alpha \le \alpha^{\star}(\delta)$, plus adaptive refit from $|\mathrm{d}\hat{\mathcal{L}}/\mathrm{d}t|$ (information gain).
& geodesic Euler; exact, one-sample stochastic or top-$k$ velocity; temperature $T$; no SC \\
\bottomrule
\end{tabular}
\end{table}

\para{Analog Bits \citep{chen2022analogbits}.} 
It is not a language model, but it is a key step towards CoBit's framework \citep{cobit} on which we rely on extensively in this work.

\para{CoBit \citep{cobit}.} 
The closest model to our setup. Its denoiser outputs bitwise clean probabilities. Bits being binary, output probabilities equal $\mathbb{E}[x_0 | x_\sigma]$, so $D_{\theta}(x_{\sigma},\sigma)=\mathbb{E}[x_0 | x_\sigma]$ holds exactly and the score follows from \eqref{equ:score}. It further utilizes a matched residual parametrization of the bitwise probabilities.

\para{LangFlow \citep{chen2026langflow}.} LangFlow uses a Flow parametrization and represents tokens via learned embeddings. Under \eqref{equ:ve_map}, its time variable $l$ is exactly $l = 2 \log \sigma$. The network predicts token probabilities and is trained with cross-entropy; the denoiser from Section \ref{sec:background} is recovered as the embedding expectation $\hat{z}_\theta = E^\top \hat{x}_\theta$. Sampling is restricted to the deterministic ODE to allow flow-map distillation.

\para{FLM / FMLM \citep{lee2026flm}.} FLM represents tokens as one-hot vectors. Hence, the posterior mean is the categorical posterior, which allows a tokenwise softmax denoiser trained with cross-entropy. FLM relies on a decoding error rate $P_e(t)$ based time reparametrization. This is very close conceptually to our information-uniform schedule in Section \ref{tab:dual_schedul}, as we approximate conditional entropy via validation loss.

\para{ELF \citep{hu2026elf}.} ELF is the only model that is not applying token-level supervision along the trajectory: discretization is confined to $t=1$, where the same network, in decoding mode, is followed by an unembedding layer. This keeps ELF close to the image-domain instantiation, allowing direct use of existing techniques from that domain. ELF also keeps an EDM-style logit-normal noise density rather than fitting an information-uniform schedule. 
In general, ELF is the model that departs most from our framework, and from existing continuous DLMs.

\para{S-FLM \citep{deschenaux2026languagemodelinghypersphericalflows}.} S-FLM embeds tokens on the hypersphere. Noise is injected by rotation rather than addition, along the geodesic from the clean embedding to a uniform sample on $\mathbb{S}^{d-1}$. The marginal velocity averages the tangent directions $\log_{z_t}(\hat{e}_v)$ under the posterior. Tokens are then decoded via a greedy approach.

\para{Limitations of the framework.} Our general setting (Section \ref{sec:background}) applies seamlessly to most existing continuous DLMs, with three caveats which do not affect the construction of Section \ref{sec:method}: \emph{(i)} ELF's encoder is contextual, so that $x_0 = \mathcal{E}(y)$ is not obtained position-wise and $\mathcal{D} \circ \mathcal{E} = \mathrm{id}$ only up to decoder error. \emph{(ii)} S-FLM's forward process is not a pure rescaling of \eqref{equ:forward}, since it swaps additive for geodesic interpolation. The per-modality construction of Section \ref{subsec:dual_noise} however transfers unchanged. \emph{(iii)} LangFlow, FLM and S-FLM train with cross-entropy rather than the regression loss of \eqref{equ:mse}, but still estimate the same object.

\subsection{Information-uniform noise schedules}
\label{app:info_uniform}

The information-guided noise allocation approach, originally proposed by \cite{stancevic2025entropic} and \cite{raya2026infonoise}, is adopted by the recent continuous diffusion for text works in varying forms. 

We adopt the following general approach following \cite{chen2026langflow,cobit}. Let $\pi$ denote the density of sampling steps over $u=\log~\sigma$, and let $\hat{h}_{log}(u)$ be an estimator of the conditional entropy rate per unit of log noise,
\begin{align}
    h_{\log}(\sigma)
    = \frac{d}{d \log \sigma}\, H\!\left(y \mid x_{\sigma}\right).
\end{align}
We aim to set $\pi$ so that noise allocation matches information gain: $\pi(u) \propto \hat{h}_{log}(u)$ \footnote{CoBit \citep{cobit} more specifically sets  $\pi(u) \propto \frac{\sigma^3}{\sigma^3 + 0.1^3}\hat{h}_{log}(u)^{\frac1{2}}$}, and place the $N$ solver points uniformly in the cumulative distribution function (CDF) of $\pi$,
$\sigma_i = exp(F^{-1}\!\left(1 - i/N\right))$ with $F$ the CDF of $\pi$ over
$u$. This yields a grid on which each step reveals approximately equal
conditional information about the tokens. In practice, $h_{log}(u)$ is approximated via validation loss after a loglinear schedule warmup phase. $\pi$ is either constrained to a known parametric family such as Gumbel densities \citep{chen2026langflow} or fit as an unnormalized density \citep{cobit}. We note that \cite{lee2026flm} instead define $\hat{h}_{log}(u)$ as the decoding error of their diffused one hot token vectors, which results in a similar shaped reparametrization although its mechanism differs. \cite{deschenaux2026languagemodelinghypersphericalflows} fit the schedule to the derivative of the loss profile and additionally truncate its support, while among the models we cover only \cite{hu2026elf} keeps a fixed, image-style noise density.

\subsection{Self conditioning}

Self-conditioning (SC), introduced by \cite{chen2022analogbits}, feeds the denoiser its own previous clean-data estimate as an auxiliary input, so that consecutive sampling steps are no longer independent forward passes. \cite{shabalin2026gaussiandiffusionmodelsfail} argue that this is what prevents the trajectory from drifting away from the discrete modes of the data, which explains why SC is markedly more useful for text than for images.

 At training time, this entails running a no-gradient forward pass of the model with probability $p_{sc}$ whose output clean-data estimate $x_{sc}$ is fed to the model in the subsequent steps. When no self conditioning is enabled, the model receives zeroed-out data estimates. In practice $p_{sc}=0.5$ for \cite{chen2022analogbits, cobit, hu2026elf} and $p_{sc}=0.25$ for \cite{chen2026langflow}.

 At sampling time, this entails one first step without SC, after which each previous step data estimates are fed back to the model in the next step, thus entailing no additional sampling steps.

\subsection{Instantiating the hierarchical framework}
\label{app:hierarchical_instantiation}

In Section \ref{sec:method}, we describe H-CDLM in a general setting and apply it to CoBit and FLM for experiments in Section \ref{sec:experiments}. H-CDLM can also be extended to other continuous diffusion/flow formulations; we leave the empirical evaluation of these variants to future work.

Applying H-CDLM to a given model requires three decisions only (for $k=2)$: (i) the cluster encoder $\mathcal{E}_2$, which we take to be the model's own token encoder applied to the cluster index (Section \ref{subsec:hiearchical_construc}); (ii) the loss weight $\lambda_2$ of \eqref{equ:hierarchical_objective}; (iii) and the per-modality instantiation of the schedule and sampler of Section \ref{subsec:dual_noise}. Table \ref{tab:app_models_hier} shows these for each model. No component of the base model changes: the cluster channel is concatenated to the state at every position, so architecture, objective family and sampler are inherited unchanged.

\begin{table}[h]
\centering
\footnotesize
\setlength{\tabcolsep}{2.5pt}
\renewcommand{\arraystretch}{1.15}
\caption{\textbf{Hierarchical instantiation of existing continuous DLMs.} How the $k=2$ construction of Section \ref{sec:method} applies to each model, with $n = |\mathcal{C}|$ clusters and $d_2$ the cluster embedding dimension. Added state is reported for $n=16$, the value we find optimal (Figure \ref{fig:k_lmabda_ablation}). The CoBit row corresponds to H-CoBit (Section \ref{sec:experiments}).}
\label{tab:app_models_hier}
\begin{tabular}{@{}>{\raggedright\arraybackslash}p{1.6cm} >{\raggedright\arraybackslash}p{3.4cm} >{\raggedright\arraybackslash}p{2.4cm} >{\raggedright\arraybackslash}p{5.5cm}@{}}
\toprule
Model & Cluster encoder $\mathcal{E}_2$ & Added state per position, $n=16$ & Per-modality schedule and remarks \\
\midrule
CoBit
& $\lceil \log_2 n \rceil$ analog bits appended to the token's bit patch
& $+4$ bits on $m=15/16$ (approx. $+1\%$ parameters, Section \ref{subsec:exp_setup})
& Own entropy-rate density $\pi^2_\alpha$ fitted on the cluster loss; churn can be set per-modality. Encoding the cluster \emph{inside} the token code instead of appending it underperforms. \\
LangFlow
& learned cluster embedding table $E^{(2)} \in \mathbb{R}^{n \times d_2}$, concatenated per position
& $+d_2$ channels on $D = 768$
& A second learnable Gumbel scheduler over $l^2$; the deterministic sampler leaves no churn to tune. \\
FLM,\newline FMLM
& one-hot over $\mathcal{C}$
& $+16$ on $|\mathcal{V}| = 30522$
& A second decoding-error curve $P_e^{2}(t)$, hence a second reparametrization $\tau_2$. \\
ELF
& extra embedding channels for the cluster, plus a second unembedding head at the decoding step
& $+d_2$ channels
& No information-uniform schedule to fit; the logit-normal density would have to be shifted per-modality by hand, as in image latent forcing \citep{latentforcing}. \\
S-FLM
& second unit-norm cluster table on $\mathbb{S}^{d_2-1}$, with its own SLERP interpolation
& $+d_2$ channels on $d = 768$
& Own truncation threshold $\alpha^\star_2(\delta)$, which predicts the coarse-to-fine ordering. \\
\bottomrule
\end{tabular}
\end{table}

\subsection{Evaluating diffusion language models.} Evaluating Language Models is hard, because there is no single agreed-upon measure of generation quality. Generative Perplexity (GenPPL), that is the perplexity measured by an auxiliary model (typically GPT-2 Large) on the sequences generated by the model under evaluation, has been extensively used in the continuous diffusion literature as the main evaluation metric \citep{lee2026flm, chen2026langflow, cobit}. This is preferred to Evidence Lower Bounds (ELBO) since they were shown as ill-suited to continuous diffusion \citep{khelifa2026diffusionmodelsobservegradients}, unlike for discrete DLMs for which the ELBO is a key metric. GenPPL however requires careful reporting, since it can be artificially reduced by reducing sample entropy. Even at dataset entropy, the pertinence of GenPPL remains debatable \citep{franca2026hackinggenerativeperplexityunconditional}. The standard most recent practice consists in reporting GenPPL-Entropy frontiers, which are typically increasing, with a focus on GenPPL at the dataset entropy. \cite{franca2026hackinggenerativeperplexityunconditional} also advocate for the use of distributional metrics, including MAUVE, although they come at a higher cost.
\cite{deschenaux2026languagemodelinghypersphericalflows} also introduce downstream task performance on \textit{sudoku} and GSM8k, which has the advantage of allowing fully interpretable results.

\section{Our Method's Details}
\label{app:method_details}

\subsection{The joint distribution is a valid model of the data}

Proposition~\ref{prop:deterministic_latent} is what justified discarding the
auxiliary modalities at the end of sampling. It is the text-domain counterpart
of the argument underlying \emph{latent forcing} in image diffusion
\citep{latentforcing}, where pixels are recovered from jointly diffused DINOv2
features. We
restate it in our framework and give the short proof.

\begin{proposition}
\label{prop:deterministic_latent}
Let $y^1 \sim p_{\mathrm{data}}$ and let $y^j = \Gamma_j(y^1)$ for measurable maps
$\Gamma_j$, $j = 2, \dots, k$. Write $y = (y^1, \dots, y^k)$ and let $p$ denote its
joint distribution. Then the first marginal of $p$ is $p_{\mathrm{data}}$. Consequently,
any sampler whose output is distributed according to $p$ yields an exact sampler for
$p_{\mathrm{data}}$ by returning $y^1$, irrespective of the order in which the
coordinates are resolved and of the per-modality noise schedules used to resolve them.
\end{proposition}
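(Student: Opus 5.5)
The plan is to write the joint law $p$ as a pushforward and read off its first marginal. Define $\Phi : \mathcal{V}^L \to \mathcal{V}^L \times (\mathcal{V}^2)^L \times \dots \times (\mathcal{V}^k)^L$ by $\Phi(y^1) = (y^1, \Gamma_2(y^1), \dots, \Gamma_k(y^1))$. This map is measurable because each $\Gamma_j$ is measurable and the identity is measurable; the spaces are finite, so measurability is automatic anyway. By construction, $p = \Phi_{\#} p_{\mathrm{data}}$. Let $\pi_1$ be the projection onto the first coordinate. Then $\pi_1 \circ \Phi = \mathrm{id}$, so the first marginal is
\[
(\pi_1)_{\#} p = (\pi_1 \circ \Phi)_{\#} p_{\mathrm{data}} = p_{\mathrm{data}}.
\]
Concretely, for any event $A$,
\[
p(\pi_1^{-1}(A)) = p_{\mathrm{data}}(\Phi^{-1}(\pi_1^{-1}(A))) = p_{\mathrm{data}}(A).
\]
This settles the first claim.

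For the consequence, suppose a sampler outputs $\hat y = (\hat y^1, \dots, \hat y^k)$ with $\hat y \sim p$. The law of $\hat y^1$ is then $(\pi_1)_{\#}$ of the law of $\hat y$, which is $(\pi_1)_{\#} p = p_{\mathrm{data}}$. The only assumption is on the law of the final output. The order in which coordinates are resolved, and the per-modality schedules $\sigma_{i,j}$, affect only the intermediate path, not the output law, so they play no role in the conclusion. I would state this explicitly to make the "irrespective of" clause precise.

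One remark is worth adding. The joint law $p$ is supported on the graph $\{y : y^j = \Gamma_j(y^1)\ \forall j\}$, and the auxiliary modalities carry no extra randomness. Discarding them therefore loses nothing, and no reweighting or rejection step is needed.

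The main point of care is not the computation, which is immediate, but being clear about scope. The proposition assumes the sampler's output is \emph{exactly} distributed as $p$. If the learned sampler outputs a law $q \neq p$ that puts mass off the graph, the token marginal of $q$ is still well defined, but it differs from $p_{\mathrm{data}}$ by at most $\mathrm{TV}(q,p)$, since pushforward contracts total variation. I would add this as a one-line remark so the statement is not over-read, but keep the proof itself to the pushforward identity above.
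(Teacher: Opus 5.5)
Your proof is correct and follows the paper's argument. The paper likewise notes that $y = (y^1, \Gamma_2(y^1), \dots, \Gamma_k(y^1))$ with $y^1 \sim p_{\mathrm{data}}$, so the first coordinate has law $p_{\mathrm{data}}$, and that the conclusion depends only on the law of the output, not on ordering or schedules; your pushforward identity $(\pi_1 \circ \Phi)_{\#} p_{\mathrm{data}} = p_{\mathrm{data}}$ just makes this explicit, and your total-variation remark about an imperfect output law $q$ is a valid addition that the paper does not make.
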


\begin{proof}
By construction, $y = (y^1, \Gamma_2(y^1), \dots, \Gamma_k(y^1))$ with
$y^1 \sim p_{\mathrm{data}}$, so the first coordinate of $y \sim p$ is distributed
as $p_{\mathrm{data}}$. Hence, if a sampler outputs $\hat{y} \sim p$, then
$\hat{y}^1 \sim p_{\mathrm{data}}$. This depends only on the law of $\hat{y}$, not on
the order or schedules used to produce it.
\end{proof}

\subsection{Per-modality information uniform schedulers}
\label{app:per_modality_schedul}

A key opportunity of joint diffusion is to let the denoiser exploit different noise levels per-modality. In multimodal image diffusion, this can be achieved by \emph{ordering} the modalities: the high level modality is kept cleaner than the pixels, so that resolved semantics guide pixel denoising \citep{latentforcing, kouzelis2025boosting}. In image diffusion, this requires setting an
explicit ordering of the modalities' schedules, after careful sweeps over time-shift parameters.

Our framework offers a simpler alternative: training a \emph{per-modality} information-uniform noise schedule (see Section \ref{subsec:sampling}). For each modality $j$, let $\pi^j$ denote the density of sampling steps over $u^j = \log \sigma_j$, and let $\hat{h}^j_{log}(u^j)$ be an estimator of the
conditional entropy rate per unit of log noise
\begin{align}
    h^{j}_{\log}(\sigma_j)
    = \frac{d}{d \log \sigma_j}\,
      H\!\left(y_j \,\middle|\, x_{\bm{\sigma}}\right).
\end{align}
Let $\pi^j(u^j) \propto \hat{h}^{j}_{\log}(u^j)$ with
$F_j$ its Cumulative Distribution Function (CDF), so that noise allocation matches information gain. For modality $j$, placing the $N$ solver steps $i$ uniformly in the CDF of $\pi^j$,
$\sigma_{i,j} = exp(F^{-1}_j\!\left(1 - i/N\right))$, this yields a grid on which each step reveals approximately equal
conditional information about the underlying data. We refer to Appendix \ref{app:extended_background} for details on estimating $\hat{h}^j$. Set \emph{per-modality}, this information-uniform noise schedule from Section \ref{sec:background} provides a natural candidate for the per-modality schedules. Figure \ref{fig:noise-schedules} shows the two measured densities for $k=2$. Cluster density is visibly right-shifted: cluster identity is a coarser variable, so information about it survives larger noise so that its entropy is released at higher $\sigma$. This leads to two sampling strategies:

\paragraph{Shared schedule: implicit ordering.}
If both modalities follow a single shared noise grid ($\sigma_{i,1}=\sigma_{i,2}$), the right shift implies that at any step along the sampling trajectory, a larger part of the cluster information than the token information is resolved. The coarse-to-fine ordering recovered by image diffusion through explicit time-shifts \citep{latentforcing} hence emerges \emph{implicitly}.

\paragraph{Coupled per-modality schedule: synchronizing modalities.}
Alternatively, each modality can follow its \emph{own} information-uniform grid, coupled through a single progress variable, so that every modality traverses its own noise axis
information-uniformly. By construction, this places both modalities at the same \emph{information}
quantile. The implicit ordering above is thus deliberately removed.

These two grids represent two competing hypotheses. Either \emph{clusters should lead}, or \emph{information should be released uniformly within each modality}. We compare these empirically in Section \ref{subsec:ablations}. By default, we adopt the coupled per-modality schedule.

\subsection{Training-time noise sampling.}
The distribution of $\bm{\sigma}$ in ~\eqref{equ:hierarchical_objective} admits two natural choices,
mirroring the multi- versus single-schedule distinction of
\citet{latentforcing}: sampling the $\sigma_j$ \emph{independently},
which trains the model on all relative noise configurations and permits
arbitrary inference trajectories; or sampling them \emph{coupled}
through a shared quantile, $v \sim \mathcal{U}[0, 1]$ and
$\sigma_j = exp(F_j^{-1}(v))$, which concentrates capacity on the trajectory
actually used at inference. We use the coupled variant by default.

For a budget of $N$ sampling steps, this yields a noise grid 
\begin{align}
\label{equ:coupled_grid}
    \bm{\sigma}_i
    = \big(\sigma_{i,1}, \dots, \sigma_{i,k}\big),
    \qquad
    \sigma_{i,j} = exp(F_j^{-1}\!\left(1 - \tfrac{i}{N}\right)),
    \qquad i = 0, \dots, N.
\end{align}

\subsection{Variations and links
}
\label{subsec:variations}

\paragraph{Beyond token clusters.}
The framework accommodates arbitrary hierarchies: deeper cluster trees
($k > 2$), or sequence-level semantic variables (e.g.,
sentence-level summaries), the latter requiring only a modality with
its own length and alignment. We evaluate the per-token, $k = 2$
instantiation and leave these to future work. \citet{wu2026rethinkingtokenpredictiontreestructured} especially consider discrete diffusion along hierarchical trees. They however find tree depth 2, that is $k=2$, to yield the best results.

\paragraph{Model Dependent Parameter tuning.}
Hierarchical Continuous Diffusion effectively designs parallel diffusion processes that are denoised jointly. This framework is applicable to any continuous DLM. As such, each specific model's sampling mechanisms and parameters can be tuned \emph{per-modality}, as we show for the entropy gated schedule and the stochastic churn in Section \ref{subsec:dual_noise}. This model specific tuning is expected to enhance the interplay between modalities. We explore varying per-modality stochastic samplers for the special case of H-CoBit, with substantial sample quality improvements (see Tables \ref{tab:main_table}, \ref{tab:dual_churn}).

\paragraph{Links to discrete diffusion.} Our method can be seen as a continuous diffusion analog to HDLM \citep{zhou2025next}, which unmasks \emph{high level} clusters before tokens. We draw from their clustering method, and also present a framework for $k$ arbitrary modalities evaluated in the special case $k=2$. In our method however, all modalities are still diffused \emph{jointly}, despite modality-specific schedulers. HDLM is based on masked diffusion, and hence fully predicts a token's cluster before predicting the actual token. 

\section{Experimental Details}
\label{app:exp_details}

We take CoBit's training and sampling recipe unchanged \citep{cobit}, and only add the cluster modality of Section~\ref{sec:method}. CoBit and H-CoBit share every hyperparameter except the cluster bits and $\lambda_2$. Table~\ref{tab:hparams} summarizes the per-dataset settings.

\paragraph{Architecture.} A 12-layer, 12-head Diffusion Transformer with hidden size 768, SwiGLU feed-forward of width 3072, RoPE, adaLN noise conditioning, dropout $0.1$, and CoBit's matched-filter residual output head \citep{cobit}. Each token's bit code is one patch, and H-CoBit appends the $\lceil\log_2 n\rceil=4$ cluster bits to it. The denoiser is conditioned on $\sigma^1$ only, which keeps the backbone's conditioning identical to the baseline.

\paragraph{Optimization.} AdamW, learning rate $3\cdot10^{-4}$, $2.5$k warm-up steps, gradient clipping at $1$, EMA $0.9999$, global batch size $512$, bf16. Initial training noise is log-normal ($P_{\mathrm{mean}}{=}-1.2$, $P_{\mathrm{std}}{=}1.2$) with EDM weighting and $\sigma_{\mathrm{data}}{=}1/2$ \citep{karras2022edm}, and self-conditioning with $p_{sc}{=}0.5$ \citep{chen2022analogbits}. On LM1B and OWT the learning rate follows a cosine decay over a 1M-step horizon. The 600k (LM1B H-CoBit) and 750k (OWT, both arms) checkpoints are therefore taken before the decay completes. On TinyGSM we follow CoBit's TinyGSM recipe: a constant learning rate, $\beta_2{=}0.999$ and no weight decay. We train on 4--8 A100-64GB GPUs. The batch size is global and there is no gradient accumulation.

\paragraph{Information-uniform schedule.} We use CoBit's online entropy-rate estimator. A 40k-step log-linear warmup is followed by refits every 2k steps from a 128-bin loss histogram, each blended in over 10k steps. H-CoBit fits one schedule per modality.

\paragraph{Sampling.} DDIM probability-flow ordinary differential equation with EDM churn self-conditioning \citep{cobit}, stopping at $\sigma_{\min}{=}0.08$. We sweep the churn $\gamma$ jointly with NFE $\in\{64,\dots,1024\}$. The per-modality sampler churns clusters in the late window $W_{\text{late}}=[0,0.5]$ only, with $\gamma_2$ as given in Table~\ref{tab:main_table}.

\paragraph{Evaluation.} GenPPL is GPT-2 Large perplexity over 1024 unconditional samples, which are decoded to text and retokenized. Entropy is the token unigram entropy per sample, averaged over samples, and the real-text references are 4.34 (LM1B) and 5.46 (OWT). MAUVE uses GPT-2 Large features on $1024$ samples against held-out text, and we report the mean over 6 $k$-means seeds \citep{franca2026hackinggenerativeperplexityunconditional}. For GSM8K, following \cite{deschenaux2026languagemodelinghypersphericalflows}, we train conditionally on the question, keeping the prompt noise-free and applying the loss to the answer only. We sample one answer for each of the 1319 problems and score the exact match of the final numeric answer.
\begin{table}[t]
\centering
\caption{\textbf{Main-run settings for H-CoBit.} All other hyperparameters follow \citet{cobit}.}
\label{tab:hparams}
\footnotesize
\setlength{\tabcolsep}{4pt}
\begin{tabular}{@{}lccc@{}}
\toprule
 & LM1B & OpenWebText & TinyGSM \\
\midrule
Tokenizer                      & BERT-base-uncased  & \texttt{gpt2id\_bpe16} & SmolLM-135M \\
Vocabulary $|\mathcal{V}|$     & 30,522             & 65,536                 & 49,153 \\
Sequence length $L$            & 128                & 1024                   & 512 \\
Bits/token (CoBit\,/\,H-CoBit) & $15\to19$          & $16\to20$              & $16\to20$ \\
Steps (CoBit\,/\,H-CoBit)      & 1M\,/\,600k        & 750k\,/\,750k          & 250k\,/\,250k \\
LR schedule                    & cosine (1M)        & cosine (1M)            & constant \\
$(\beta_1,\beta_2)$            & $(0.9,0.99)$       & $(0.9,0.99)$           & $(0.9,0.999)$ \\
Weight decay                   & $0.01$             & $0.01$                 & $0$ \\
Clusters $n$ / weight $\lambda_2$ & 16\,/\,0.27   & 16\,/\,$0.1$           & 16\,/\,$0.3$ \\
Cluster embeddings             & LangFlow-LM1B      & LangFlow-OWT (span mean) & S-FLM \\
\bottomrule
\end{tabular}
\end{table}

\paragraph{Extending to FLM.} We also instantiate our H-CDLM framework on FLM, yielding H-FLM. For that run, we use the exact same training and sampling hyperparameters as \citet{lee2026flm}. The cluster loss weight is set at $\lambda_2=1.0$. For efficiency reasons at sampling, we drop the cluster head and use the cluster probabilities implied by the token head, that is we compute cluster probabilities via the sum of the token probabilities of each cluster (see equation \ref{equ:token_implied_cluster}). We apply stochastic sampling to FLM, unlike its authors, by following the procedure described in Section \ref{sec:background}. For FLM, we set $s_{\text{noise}}=1.0$, and $\gamma=0.03$ which improves over the deterministic sampling of the original paper. For H-FLM, we set $\gamma=0.3$,  $s_{\text{noise}}=1.0$ and vary cluster churn and window (see Section \ref{subsec:h_flm}).

\section{Additional Ablations}
\label{app:add_exp}
\subsection{MAUVE evaluation of H-CoBit}

We evaluate MAUVE at matched entropy levels and report results in Table \ref{tab:MAUVE}. At dataset entropy and above, and at matched or lower NFE, H-CoBit beats the baseline by approximately 0.03-0.04 and 0.05-0.11 on LM1B and OWT respectively. Below dataset entropy, CoBit surpasses our model.

\begin{table*}[h!]\centering
\caption{\textbf{\textsc{mauve} at fixed entropy levels, best budget per level.} Entropy-matched MAUVE ($H_1$ = $+0.02$ nats off it, $H_{\mathrm{data}}$ = the dataset's own, $H_2$ = $-0.05$ nats off it); cell is the best \textsc{mauve} any swept NFE reaches at that level, winning NFE in parentheses. Target entropies (nats): LM1B 4.36/4.34/4.29; OWT 5.48/5.46/5.41.}

\begingroup\small
\setlength{\tabcolsep}{6pt}
\begin{tabular}{@{}lrrrrr@{}}
\toprule
 &  &  & \multicolumn{3}{c}{\textbf{MAUVE $\uparrow$ at fixed entropy (best NFE)}} \\
\cmidrule(lr){4-6}
\textbf{Model} & \textbf{steps} & \textbf{$\lambda_2$} & \textbf{$H_1$} & \textbf{$H_{\mathrm{data}}$} & \textbf{$H_2$} \\
\midrule
\multicolumn{6}{@{}l}{\itshape \underline{LM1B} \textnormal{(final checkpoints)}} \\
\textcolor{cmuted}{\itshape target $H$} &  &  & \textcolor{cmuted}{\itshape $4.36$ ($H_1$)} & \textcolor{cmuted}{\itshape $4.34$ ($H_{\mathrm{data}}$)} & \textcolor{cmuted}{\itshape $4.29$ ($H_2$)} \\
CoBit & 1M & -- & 0.870 (64) & 0.861 (128) & \textbf{0.697 (256)} \\
H-CoBit & 600k & $\frac{4}{15}$ & \textbf{0.900 (64)} & \textbf{0.903 (128)} & 0.648 (512) \\
\midrule
\multicolumn{6}{@{}l}{\itshape \underline{OpenWebText} \textnormal{(final checkpoints)}} \\
\textcolor{cmuted}{\itshape target $H$} &  &  & \textcolor{cmuted}{\itshape $5.48$ ($H_1$)} & \textcolor{cmuted}{\itshape $5.46$ ($H_{\mathrm{data}}$)} & \textcolor{cmuted}{\itshape $5.41$ ($H_2$)} \\
CoBit & 750k & -- & 0.329 (256) & 0.530 (128) & \textbf{0.613 (128)} \\
H-CoBit & 750k & 0.1 & \textbf{0.435 (128)} & \textbf{0.580 (128)} & 0.563 (128) \\
\bottomrule
\end{tabular}
\endgroup
  \label{tab:MAUVE}
\end{table*}

\subsection{Training and sampling speed comparison of H-CoBit}
\label{app:speed}

Table \ref{tab:speed} provides training and sampling speeds of CoBit-S and H-CoBit. As expected due to the larger channel width of H-CoBit, we observe a small sampling and training time overhead. Figure \ref{fig:teaser} and Table \ref{tab:main_table} show that we match CoBit-S' quality at lower NFE, which more than compensates for this speed gap. 

\begin{table*}[h]\centering
\caption{\textbf{Inference and training costs.} Both arms built and timed in one process on one A100-SXM-64GB; training is forward, backward and optimizer step with no dataloading, inference is deterministic sampling with self-conditioning at NFE=128. Per-call time is the inference unit because it is what multiplies by NFE to give generation time. Each dataset is read at the largest batch it was profiled at, where the GPU is saturated.}

\begingroup\small
\setlength{\tabcolsep}{4.5pt}
\begin{tabular}{@{}lcrrrrrr@{}}
\toprule
 &  & \multicolumn{3}{c}{\textbf{Training (ms/step)}} & \multicolumn{3}{c}{\textbf{Inference ($\mu$s/call)}} \\
\cmidrule(lr){3-5}\cmidrule(lr){6-8}
\textbf{Dataset} & \textbf{bits/tok} & \textbf{CoBit} & \textbf{H-CoBit} & \textbf{$\Delta$} & \textbf{CoBit} & \textbf{H-CoBit} & \textbf{$\Delta$} \\
\midrule
LM1B & 15$\to$19 & 123.5 & 125.8 & +1.9\% & 292 & 301 & +2.9\% \\
TinyGSM / GSM8K & 16$\to$20 & 231.5 & 237.3 & +2.5\% & 1174 & 1213 & +3.3\% \\
OpenWebText & 16$\to$20 & 244.3 & 250.0 & +2.3\% & 2401 & 2503 & +4.2\% \\
\quad $n{=}64$ & 16$\to$22 & 244.3 & 252.2 & +3.2\% & 2401 & 2520 & +5.0\% \\
\bottomrule
\end{tabular}
\endgroup

  \label{tab:speed}
\end{table*}

\subsection{A Co-Bit Specificity: Appended \emph{vs} Embedded Cluster Bits}
\label{subsubsec:embedded_bit_ablation}

CoBit \citep{cobit} represents tokens continuously as bit sequences. For this specific representation, it would hence be possible to embed the cluster indexing of tokens directly in their bit code, instead of adding additional bits to encode it.
Figure \ref{fig:embedded_v_appended} compares this \emph{embedded bits} strategy to our main \emph{appended} modality approach for LM1B at 100k training steps. The \emph{embedded} approach yields worse than baseline results for $n=16$, indicating that an independent cluster representation is necessary for our hierarchical architecture to yield improvements.

\begin{figure*}[h!]\centering
\begin{tikzpicture}
  \begin{axis}[
    paperaxis,
    width=12.44726127577141cm,
    height=6.5cm,
    xmin=4.08,
    xmax=4.385,
    ymin=25,
    ymax=165,
    xlabel={token unigram entropy $H$ (nats)},
    ylabel={Gen-PPL (GPT-2 large)},
    scaled x ticks=false,
    scaled y ticks=false,
    xtick={4.1,4.15,4.2,4.25,4.3,4.35},
    ytick={40,70,100,130,160},
  ]
    \addplot+[color=cdata, densely dashed, line width=0.8pt, mark=none, forget plot]
      table[row sep=\\] {
        4.34346 25\\
        4.34346 165\\
      };
    \addplot+[color=ccobit, solid, line width=1.1pt, mark=*, mark size=1.1pt, mark size=1.0pt, mark options={solid, fill=ccobit, draw=ccobit}, only marks, draw opacity=0.25, fill opacity=0.25, forget plot]
      table[row sep=\\] {
        4.12551 45.0085\\
        4.22398 56.4666\\
        4.2257 54.405\\
        4.23447 55.5772\\
        4.24344 58.1337\\
        4.24819 58.1528\\
        4.25756 60.912\\
        4.26349 65.036\\
        4.2789 68.9974\\
        4.29404 71.4594\\
        4.29471 71.4626\\
        4.29589 74.153\\
        4.29662 72.7658\\
        4.30005 73.6933\\
        4.30609 78.4606\\
        4.30766 80.5088\\
        4.31873 95.9965\\
        4.31955 92.9019\\
        4.32019 93.7627\\
        4.32097 95.4273\\
        4.32111 102.074\\
        4.32161 95.9159\\
        4.32247 96.4428\\
        4.32692 100.449\\
        4.32744 107.649\\
        4.3292 108.495\\
        4.331 109.893\\
        4.33216 139.67\\
        4.33259 149.109\\
        4.33548 142.015\\
        4.33562 143.285\\
        4.33638 141.548\\
      };
    \addplot+[color=ccobit, solid, line width=1.1pt, mark=*, mark size=1.1pt, line width=1.2pt, mark size=1.4pt, mark options={solid, fill=ccobit, draw=ccobit}, forget plot]
      table[row sep=\\] {
        4.09619 41.7042\\
        4.09954 42.5423\\
        4.10634 42.937\\
        4.12582 44.703\\
        4.14338 46.9577\\
        4.18637 49.1474\\
        4.21144 51.7083\\
        4.23924 52.5978\\
        4.25631 57.7451\\
        4.25779 59.4726\\
        4.26435 61.9033\\
        4.28001 66.2503\\
        4.28585 66.9048\\
        4.29007 70.111\\
        4.29688 70.6381\\
        4.30225 71.6057\\
        4.30884 76.3962\\
        4.31637 79.4646\\
        4.31867 84.9256\\
        4.32355 88.8478\\
        4.3262 93.921\\
        4.32708 98.3787\\
        4.32883 99.4723\\
        4.33259 100.094\\
        4.33449 107.679\\
        4.33714 139.262\\
        4.33796 143.435\\
      };
    \addplot+[color=cclust, dashed, line width=1.1pt, mark=*, mark size=1.1pt, mark size=1.0pt, mark options={solid, fill=cclust, draw=cclust}, only marks, draw opacity=0.25, fill opacity=0.25, forget plot]
      table[row sep=\\] {
        4.29708 66.1634\\
        4.30594 67.9791\\
        4.30806 66.8964\\
        4.34148 79.651\\
        4.34587 79.8108\\
        4.34611 83.3163\\
        4.3487 81.3574\\
        4.35202 104.826\\
        4.3525 146.664\\
        4.35369 108.449\\
        4.3537 113.607\\
        4.35761 143.717\\
        4.35846 107.297\\
        4.36056 143.766\\
        4.363 138.166\\
        4.36339 145.684\\
        4.36431 139.395\\
      };
    \addplot+[color=cclust, dashed, line width=1.1pt, mark=*, mark size=1.1pt, line width=1.2pt, mark size=1.4pt, mark options={solid, fill=cclust, draw=cclust}, forget plot]
      table[row sep=\\] {
        4.28696 61.8039\\
        4.31122 64.5132\\
        4.31926 70.7631\\
        4.35397 78.4143\\
        4.35552 84.8937\\
        4.35673 99.6486\\
        4.35764 101.851\\
        4.36359 101.998\\
        4.36436 102.845\\
        4.36795 138.841\\
      };
    \addplot+[color=cprefix, densely dash dot, line width=1.1pt, mark=*, mark size=1.1pt, mark size=1.0pt, mark options={solid, fill=cprefix, draw=cprefix}, only marks, draw opacity=0.25, fill opacity=0.25, forget plot]
      table[row sep=\\] {
        4.10889 42.0652\\
        4.11082 42.5463\\
        4.13393 45.0559\\
        4.23109 54.1224\\
        4.23223 56.5959\\
        4.24229 55.2143\\
        4.25271 58.6985\\
        4.25528 59.8721\\
        4.25531 60.8015\\
        4.25968 65.8916\\
        4.288 71.9112\\
        4.30066 74.4447\\
        4.30129 73.0195\\
        4.30424 72.8056\\
        4.30492 76.0725\\
        4.30944 77.2149\\
        4.3115 79.5261\\
        4.3168 81.7875\\
        4.33437 111.294\\
        4.33708 106.611\\
        4.33833 106.189\\
        4.34001 116.375\\
        4.34012 106.761\\
        4.34053 108.695\\
        4.34139 107.45\\
        4.3416 110.339\\
        4.34165 106.177\\
        4.34193 107.058\\
        4.34274 110.327\\
        4.3532 157.077\\
        4.35396 158.51\\
        4.35686 156.904\\
      };
    \addplot+[color=cprefix, densely dash dot, line width=1.1pt, mark=*, mark size=1.1pt, line width=1.2pt, mark size=1.4pt, mark options={solid, fill=cprefix, draw=cprefix}, forget plot]
      table[row sep=\\] {
        4.11123 41.9692\\
        4.11194 43.733\\
        4.1358 44.5332\\
        4.24521 53.2461\\
        4.24536 54.4928\\
        4.24771 55.1352\\
        4.24785 56.6212\\
        4.25225 57.6106\\
        4.26065 58.0876\\
        4.27384 64.9207\\
        4.30526 71.7045\\
        4.30885 75.4991\\
        4.31042 75.566\\
        4.3122 76.5993\\
        4.31331 80.0055\\
        4.32111 80.2207\\
        4.32331 86.3566\\
        4.32654 88.9711\\
        4.33685 105.021\\
        4.3397 105.358\\
        4.34385 105.665\\
        4.34421 107.718\\
        4.3443 111.926\\
        4.35223 153.732\\
        4.3527 155.511\\
        4.3583 156.378\\
        4.36035 156.54\\
      };
    \addplot+[color=black, densely dashed, line width=0.8pt, mark=star, mark size=1.1pt, only marks, mark size=3.0pt, mark options={line width=0.9pt}, forget plot]
      table[row sep=\\] {
        4.34346 53.1392\\
      };
    \node[cdata, font=\scriptsize, rotate=90, anchor=west, inner sep=1pt] at ([xshift=6pt, yshift=3pt]axis cs:4.3435,25) {data entropy};
    \node[anchor=north west, draw=black!25, line width=0.4pt, fill=white, fill opacity=0.9, text opacity=1, inner sep=2.5pt, font=\scriptsize] at (rel axis cs:0.025,0.975) {\begin{tabular}{@{}l@{~}l@{}}\tikz[baseline=-0.5ex]{\draw[ccobit, solid, line width=1.1pt] (0pt,0pt)--(13pt,0pt);} & no cluster bits (15 b/token) \\
\tikz[baseline=-0.5ex]{\draw[cclust, dashed, line width=1.1pt] (0pt,0pt)--(13pt,0pt);} & appended (19 b/token) \\
\tikz[baseline=-0.5ex]{\draw[cprefix, densely dash dot, line width=1.1pt] (0pt,0pt)--(13pt,0pt);} & prefix (15 b/token)\end{tabular}};
  \end{axis}
\end{tikzpicture}
\input{figures/embedded_layout_ablation_caption}
  \label{fig:embedded_v_appended}
\end{figure*}

\subsection{Token-cluster agreement for H-CoBit}
\label{subsec:token_cluster agreement}

We study how the information conveyed by the token head and cluster head differs throughout the sampling process in Figure \ref{fig:token_cluster_agreement}. The token-implied cluster corresponds to the most likely cluster after marginalizing token probabilities: 
\begin{align}
\bar{p}_\theta\big(x^{2,i} = c \mid x_\sigma\big)
  \;=\; \sum_{w \in V:\, \Gamma(w) = c} p_\theta\big(x^{1,i} = w \mid x_\sigma\big),
  \qquad c \in \{1, \dots, n\},
  \label{equ:token_implied_cluster}
\end{align}
where $\bar{p}_{\theta}$ is the cluster distribution implied by the token block alone. For H-CoBit, $p_\theta\big(x^{1,i} = w \mid x_\sigma\big)$ is computed via the product of bit probabilities. We compare it with the cluster block's own prediction $p_\theta(x^{2,i}=c \mid \mathbf{x}_\sigma)$.

We plot agreement with the final token, that is the percentage of end-of-sampling tokens that are already inferred at a given sampling step.

We observe that the cluster head adds information on the token-implied cluster; its agreement with the final generated sample is consistently higher throughout sampling (except for very early steps). This shows that the cluster modality carries additional information 
and is therefore
not purely redundant with the tokens.

\begin{figure*}[h!]\centering
\input{figures/token_cluster_agreement_hcobit_body}
\input{figures/token_cluster_agreement_hcobit_caption}
  \label{fig:token_cluster_agreement}
\end{figure*}

\subsection{Per-modality Sampling Evaluation of H-CoBit}
\label{app:multimodal_churn}

Table \ref{tab:dual_churn} provides a comprehensive evaluation of per-modality sampling for H-CoBit. As detailed in Section \ref{subsec:ablations}, we test adjusting global noise $s_{\text{noise}}$, the churn window $\mathcal{W}$ (early or late noise half), and the churn level $\gamma$ \emph{per-modality}.
Per-modality sampling adjustments allow multiple degrees of freedom. On our main benchmark, GenPPL at dataset entropy, they allow substantial gains compared to the baseline.

\begin{table*}[h!]\centering
\caption{\textbf{Every multimodal-churn arm on both datasets for H-CoBit.} $\Delta$Gen-PPL: the mean \% change against the control at matched entropy over the interval both arms' envelopes cover (negative is better; \emph{--} where they share none). Bold is best. Magnitude alone has no effect, it must be combined with late churning to yield improvements.}

\begingroup\small
\setlength{\tabcolsep}{6pt}
\begin{tabular}{@{}lrrrrrr@{}}
\toprule
\multicolumn{2}{c}{\textbf{Gen-PPL}} &  & \multicolumn{4}{c}{\textbf{$\Delta$ vs.\ control (\%) at NFE}} \\
\cmidrule(lr){1-2}\cmidrule(lr){4-7}
\textbf{Arm} & \textbf{@$H_{\mathrm{data}}$ (NFE)} & \textbf{$H$ reach} & \textbf{64} & \textbf{128} & \textbf{256} & \textbf{512} \\
\midrule
\multicolumn{7}{@{}l}{\bfseries LM1B: $n{=}16$, 600k steps, $\lambda_2{=}4/15$, $H_{\mathrm{data}}=4.34$} \\
control (shared churn) & 57.3 {\scriptsize(256)} & 4.356 & 0 & 0 & 0 & 0 \\
\multicolumn{7}{@{}l}{\itshape Cluster-churn magnitude $\gamma_2$} \\
$\gamma_2{=}0$ (frozen) & 57.8 {\scriptsize(512)} & 4.356 & -1.5 & +8.2 & +4.7 & \textbf{-2.0} \\
$\gamma_2{=}0.1$ & 59.7 {\scriptsize(256)} & 4.357 & -3.5 & +0.9 & -0.8 & -1.1 \\
$\gamma_2{=}0.2$ & 58.1 {\scriptsize(256)} & 4.350 & -2.9 & \textbf{-2.1} & -0.1 & +2.0 \\
\multicolumn{7}{@{}l}{\itshape Cluster-churn window (phase)} \\
early, $q{\in}[0.5,1]$ & 62.3 {\scriptsize(256)} & 4.357 & +5.2 & +3.3 & +4.4 & +7.1 \\
late, $q{\in}[0,0.5]$ & 51.6 {\scriptsize(512)} & 4.365 & -4.2 & +3.7 & -1.1 & -- \\
late, $\gamma_2{=}0.3$ & 51.5 {\scriptsize(512)} & 4.364 & \textbf{-4.4} & -0.1 & -2.5 & -- \\
late, $\gamma_2{=}0.4$ & \textbf{49.4 {\scriptsize(512)}} & 4.361 & -0.8 & -- & \textbf{-4.8} & -- \\
\multicolumn{7}{@{}l}{\itshape Compositions} \\
late $\gamma_2{=}0.3$ + $s_{\mathrm{noise}}{=}1.05$ & 56.3 {\scriptsize(1024)} & 4.407 & -- & -- & -- & -- \\
late $\gamma_2{=}0.4$ + $s_{\mathrm{noise}}{=}1.05$ & 57.9 {\scriptsize(1024)} & \textbf{4.408} & -- & -- & -- & -- \\
\midrule
\multicolumn{7}{@{}l}{\bfseries OpenWebText: $n{=}16$, 750k steps, $\lambda_2{=}0.1$, $H_{\mathrm{data}}=5.46$} \\
control (shared churn) & 67.8 {\scriptsize(128)} & 5.645 & 0 & 0 & 0 & 0 \\
\multicolumn{7}{@{}l}{\itshape Cluster-churn magnitude $\gamma_2$} \\
$\gamma_2{=}0$ (frozen) & 72.6 {\scriptsize(128)} & 5.674 & +8.6 & -0.3 & -- & -- \\
\multicolumn{7}{@{}l}{\itshape Cluster-churn window (phase)} \\
late, $\gamma_2{=}0.3$ & 69.2 {\scriptsize(128)} & 5.649 & +3.6 & -4.1 & -- & -- \\
\multicolumn{7}{@{}l}{\itshape Global noise scale $s_{\mathrm{noise}}$ (both modalities)} \\
$s_{\mathrm{noise}}{=}1.02$ & 71.9 {\scriptsize(128)} & 5.659 & -5.5 & -3.3 & +6.2 & +1.9 \\
$s_{\mathrm{noise}}{=}1.05$ & 80.0 {\scriptsize(128)} & 5.709 & -10.7 & -1.1 & +14.5 & +6.8 \\
$s_{\mathrm{noise}}{=}1.10$ & -- & \textbf{5.755} & \textbf{-13.6} & +6.3 & -- & -- \\
\multicolumn{7}{@{}l}{\itshape Compositions} \\
late $\gamma_2{=}0.3$ + $s_{\mathrm{noise}}{=}1.05$ & \textbf{50.4 {\scriptsize(256)}} & 5.711 & -8.1 & \textbf{-15.4} & -- & -- \\
\bottomrule
\end{tabular}
\endgroup

  \label{tab:dual_churn}
\end{table*}

\section{Generation Examples}
\label{app:gen_examples}

In this section, we show examples of unconditional text generation of our best performing models on LM1B and OWT. We report text sequences corresponding to the hyperparameters that yield the best GenPPL at dataset entropy. 

\begin{figure}[h]
    \centering
    \input{figures/text_samples_lm1b_body}
    \input{figures/text_samples_lm1b_caption}
\end{figure}

\begin{figure}[!h]
    \centering
    \input{figures/text_samples_owt_body}
    \input{figures/text_samples_owt_caption}
\end{figure}

\section{Limitations and Extensions}
\label{subsec:limitations_extensions}

\subsection{Limitations}
We study the simplest instantiation of H-CDLM, which uses a single auxiliary modality obtained from a fixed clustering of pretrained token embeddings. First, we consider only one level of granularity. Richer hierarchies spanning multiple levels, from fine-grained clusters to broad semantic categories, are a natural extension that our framework supports but that we leave unexplored. Second, our cluster assignments are fixed before training. Learning them jointly with the model could yield groupings better suited to denoising, at the cost of additional training complexity. Finally, our experiments are limited to relatively small models, reflecting the computational resources available in an academic setting. As for diffusion language models more broadly, whether these gains persist at larger scales remains to be established.

\subsection{Extensions}

\paragraph{Cluster guidance.}
The high-level channel provides a natural handle for guidance: as in
multimodal image diffusion \citep{latentforcing}, classifier-free guidance can be applied on the cluster modality to strengthen semantic coherence. We leave this to future work.

\end{document}